\documentclass{article}
\usepackage{stmaryrd}

\PassOptionsToPackage{numbers, compress}{natbib}

\usepackage[preprint]{neurips_2024}
\usepackage[utf8]{inputenc} 
\usepackage{wrapfig}
\usepackage[T1]{fontenc}    
\usepackage{hyperref}       
\usepackage{url}            
\usepackage{booktabs}       

\usepackage{algorithm}
\usepackage{algpseudocode}
\usepackage{graphicx}
\usepackage{amsfonts}       
\usepackage{nicefrac}       
\usepackage{microtype}      
\usepackage{dsfont}
\usepackage{xcolor}         
\usepackage{subcaption}
\usepackage{graphicx}
\usepackage{mathrsfs}
\usepackage{amsmath}
\RequirePackage{hypernat}
\usepackage{bbm}
\usepackage{verbatim}
\usepackage{mymathstyle}
\usepackage{cleveref}




\title{LARS-VSA: A Vector Symbolic Architecture For Learning with Abstract Rules}

\author{%
  Mohamed Mejri \\
  School of ECE\\
  Georgia Institute of Technology\\
  Atlanta, USA \\
  \texttt{mmejri3@gatech.edu} \\
  \And
  Chandramouli Amarnath \\
  School of ECE\\
  Georgia Institute of Technology\\
  Atlanta, USA \\
  \texttt{chandamarnath@gatech.edu} \\
  \AND
  Abhijit Chatterjee \\
  School of ECE\\
  Georgia Institute of Technology\\
  Atlanta, USA \\
  \texttt{abhijit.chatterjee@ece.gatech.edu} \\
}

%

\begin{document}

\maketitle

\begin{abstract}
Human cognition excels at symbolic reasoning, deducing abstract rules from limited samples. This has been explained using symbolic and connectionist approaches, inspiring the development of a neuro-symbolic architecture that combines both paradigms. In parallel, recent studies have proposed the use of a "relational bottleneck" that separates object-level features from abstract rules, allowing learning from limited amounts of data . While powerful, it is vulnerable to the \textit{curse of compositionality} meaning that object representations with similar features tend to interfere with each other.    
In this paper, we leverage hyperdimensional computing, which is inherently robust to such interference to build a compositional architecture. 
We adapt the "relational bottleneck" strategy to a high-dimensional space, incorporating explicit vector binding operations between symbols and relational representations. Additionally, we design a novel high-dimensional attention mechanism that leverages this relational representation. Our system benefits from the low overhead of operations in hyperdimensional space, making it significantly more efficient than the state of the art when evaluated on a variety of test datasets, while maintaining higher or equal accuracy.
\end{abstract}

\section{Introduction}

Analogical reasoning based on relationships between objects is fundamental to human abstraction and creative thinking. This capability differs from our ability to acquire semantic and procedural knowledge from sensory information, processed using contemporary connectionist approaches such as deep neural networks (DNNs). However, most of these techniques fail to extract relational abstract rules from limited samples \cite{rahimi}.
Recent advancements in machine learning have enhanced abstract reasoning capabilities. 
Broadly, these rely on isolating abstract relational rules from the corresponding  representations of objects, such as symbols/(key, query) \cite{abstractor} or (key, values) \cite{esbn}. This approach, called the \textit{relational bottleneck}, is exploited in \cite{abstractor}, using attention mechanisms \cite{vaswani2017attention} to capture relevant correlations between objects in a sequence, producing relational representations. The latter structure is combined with a learnable inductive bias given by \textit{symbols}, linked to the relational representation through a \textit{relational cross-attention} \cite{abstractor} mechanism. More generally, the \textit{relational bottleneck} \cite{webb2024relational}, aims to reduce \textit{catastrophic interference} \cite{tamminen2015specific,bowers2014neural} between object-level and abstract-level features. This catastrophic interference, also known as the \textit{curse of compositionality} is due to the overuse of shared structures and the low-dimensionality of feature representations \cite{webb2024relational}. In\cite{cohen}, it is argued that over compositionality leads to \textit{inefficient generalization} (i.e, requiring expensive processing capabilities). 

The above problem is partially addressed by neuro-symbolic approaches that use quasi-orthogonal high-dimensional vectors \cite{rahimi,menet2024mimonets} for storing relational representations. In \cite{menet2024mimonets}, it is shown that high dimensional vector are less prone to interference. However, recent neuro-symbolic approaches for abstract reasoning \cite{rahimi} rely on explicit binding and unbinding mechanisms, requiring prior knowledge of abstract rules.
This paper presents LARS-VSA (for \textit{L}earning with \textit{A}bstract \textit{R}ule\textit{S}), an approach that combines the advantages of connectionist approaches in capturing implicit abstract rules and the ability of neuro-symbolic architectures to absorb relevant features with low risk of interference.

The key contributions of this paper are:
\begin{itemize}
    \item We are the \textit{first} to propose a strategy for addressing the relational bottleneck problem using a vector symbolic architecture. This performs explicit bindings in high-dimensional space, capturing relationships between symbolic representations of objects distinct from object level features. 
    \item We implement a context-based self-attention mechanism that operates directly in a  bipolar high-dimensional space. Vectors that represent relationships between symbols are developed, decoupling our high-dimensional vector symbolic approach from the need for prior knowledge of abstract rules (seen in prior work \cite{rahimi}).
    \item Our system significantly reduces computational costs by simplifying attention score matrix multiplication to binary operations, offering a lightweight alternative to conventional attention mechanisms.
\end{itemize}

In the following sections, we introduce the intuition behind and explain our novel \textit{self-attention} mechanism for hyperdimensional computing. We then discuss the
proposed vector symbolic architecture for addressing the relational bottleneck problem in high-dimensional spaces.  We compare the performance of LARS-VSA with the Abstractor \cite{abstractor}, a standard transformer architecture \cite{vaswani2017attention}, and other state-of-the-art methods on discriminative relational tasks to demonstrate the  accuracy and cost efficiency of our approach. We evaluate LARS-VSA on various synthetic sequence-to-sequence datasets and complex mathematical problem-solving tasks \cite{saxton2019analysing} to illustrate its potential in real-world applications. Finally, we show the resilience of LARS-VSA to weight heavy quantization \cite{abstractor}.

\section{Related Work}
Several studies \cite{barrett2018measuring,ricci2018same,lake2018generalization} have demonstrated that abstract reasoning and relational representations are notable weaknesses in modern neural networks. Large language models can tackle symbolic reasoning tasks to a certain extent but require extensive training data, which highlights their inability to generalize from limited examples. Consequently, significant effort has recently been directed towards addressing this deficiency. The Relation Network \cite{santoro2017simple} proposed a model for pairwise relations by applying a Multilayer Perceptron (MLP) to concatenated object features. Another architecture, PrediNet \cite{shanahan2020explicitly}, utilizes predicate logic to represent these relations. In \cite{webb2024relational}, the relational bottleneck implementation aims to separate symbolic rules from object level features. Several models are based on the latter idea: \textit{CoRelNet}, introduced in \cite{kerg2022neural} simplifies relational learning by modeling a similarity matrix. A recent study \cite{esbn} introduced an architecture inspired by Neural Turing Machines (NTM) \cite{graves2014neural} that separates relational representations from object feature representations. Building on this concept, the approach of \cite{abstractor} adapted Transformers \cite{vaswani2017attention} for abstract reasoning tasks, by creating an 'abstractor'—a mechanism based on relational and symbolic cross-attention applied to abstract symbols for sequence-to-sequence relational tasks. Additionally, a model known as the Visual Object Centering Relational Abstract architecture (OCRA) \cite{webb2024systematic} maps visual objects to vector embeddings, followed by a transformer network for solving symbolic reasoning problems such as the Raven Progressive Matrices \cite{raven1938raven}. A subsequent study \cite{mondal2024slot} combined and refined OCRA and the Abstractor to address similar challenges.
 
 However, research \cite{rahimi} has shown that hyperdimensional computing (HDC), a neuro-symbolic paradigm \cite{kanerva2009hyperdimensional}, exhibits strong abstraction capabilities through the \textit{blessing of dimensionality} mechanism \cite{menet2024mimonets}. Hyperdimensional Computing (HDC) is recognized for its low computational overhead \cite{mejri2024novel,amrouch2022brain}. In contrast, Transformers \cite{vaswani2017attention} are known to be power-intensive and time-consuming. An explicit binding and unbinding based VSA architecture \cite{rahimi} was developed to solve a visual abstract reasoning problem: Raven Progressive Matrices \cite{raven1938raven}. This \textit{explicitly} represents Raven's Progressive Matrix features (object shapes, color, etc) in the symbolic domain. However, this method requires prior knowledge of relevant object features and the abstract rules governing their relationships and is not always straightforward (e.g, mathematical reasoning tasks).  In relation to prior research and to the best of our knowledge, this paper is the first to propose a hyperdimensional computing based attention mechanism called \textit{LARS-VSA}, that models relational representations, and combines them through an explicit vector-binding operation with high dimensional symbolic vectors to perform sequence-to-sequence relational representation tasks.
 \section{Related Work: Self Attention and Relational Cross Attention}
 \begin{figure}[ht]
    \centering
    \captionsetup[subfigure]{font=bf}
    \begin{subfigure}[b]{.4\textwidth}
        \centering
        \includegraphics[width=\textwidth]{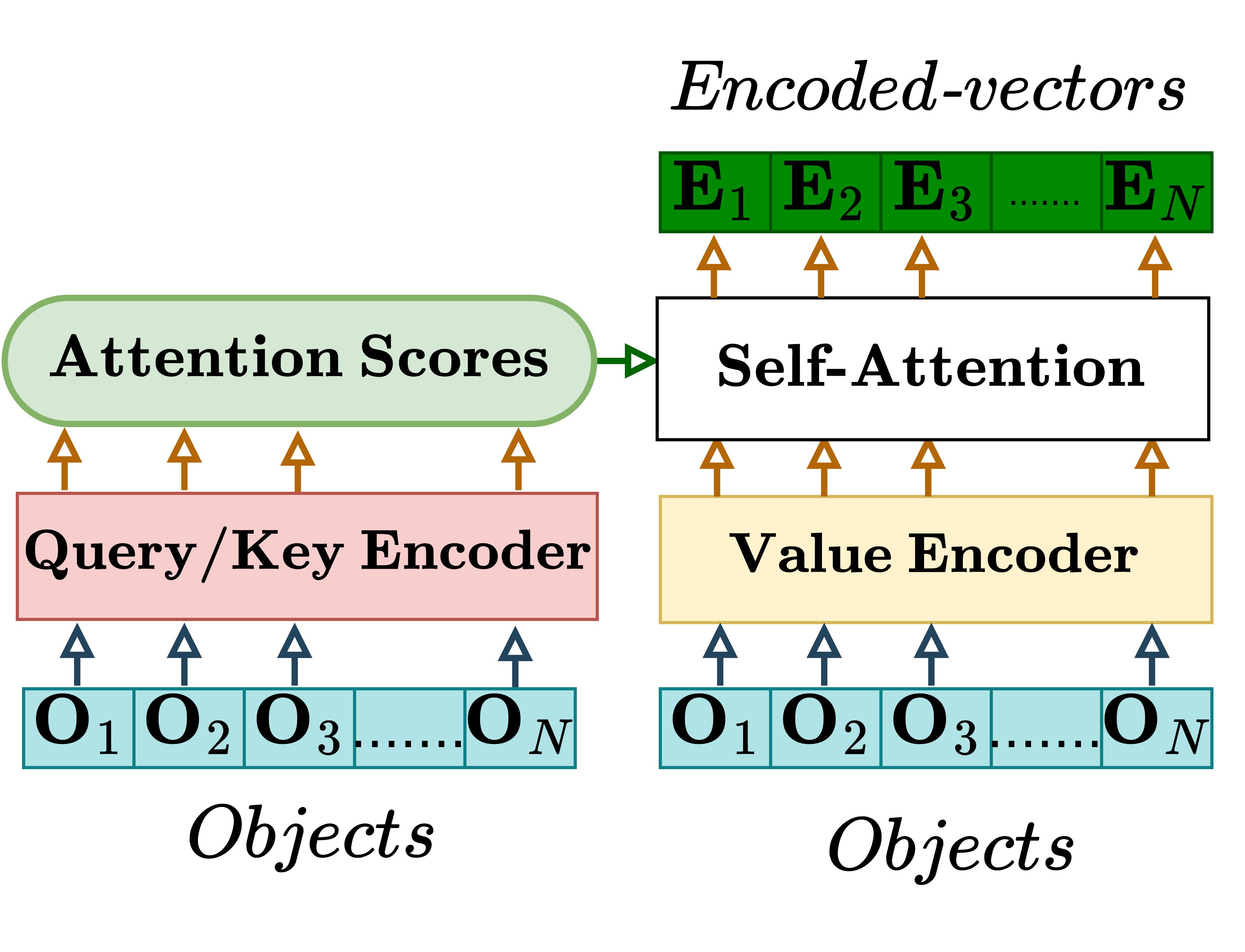}
        \caption{$\mathrm{SelfAttention}(O)$}\label{fig:self_attention}
        \label{fig:transformer}
    \end{subfigure}\hfill
    \begin{subfigure}[b]{.4\textwidth}
        \centering
        \includegraphics[width=\textwidth]{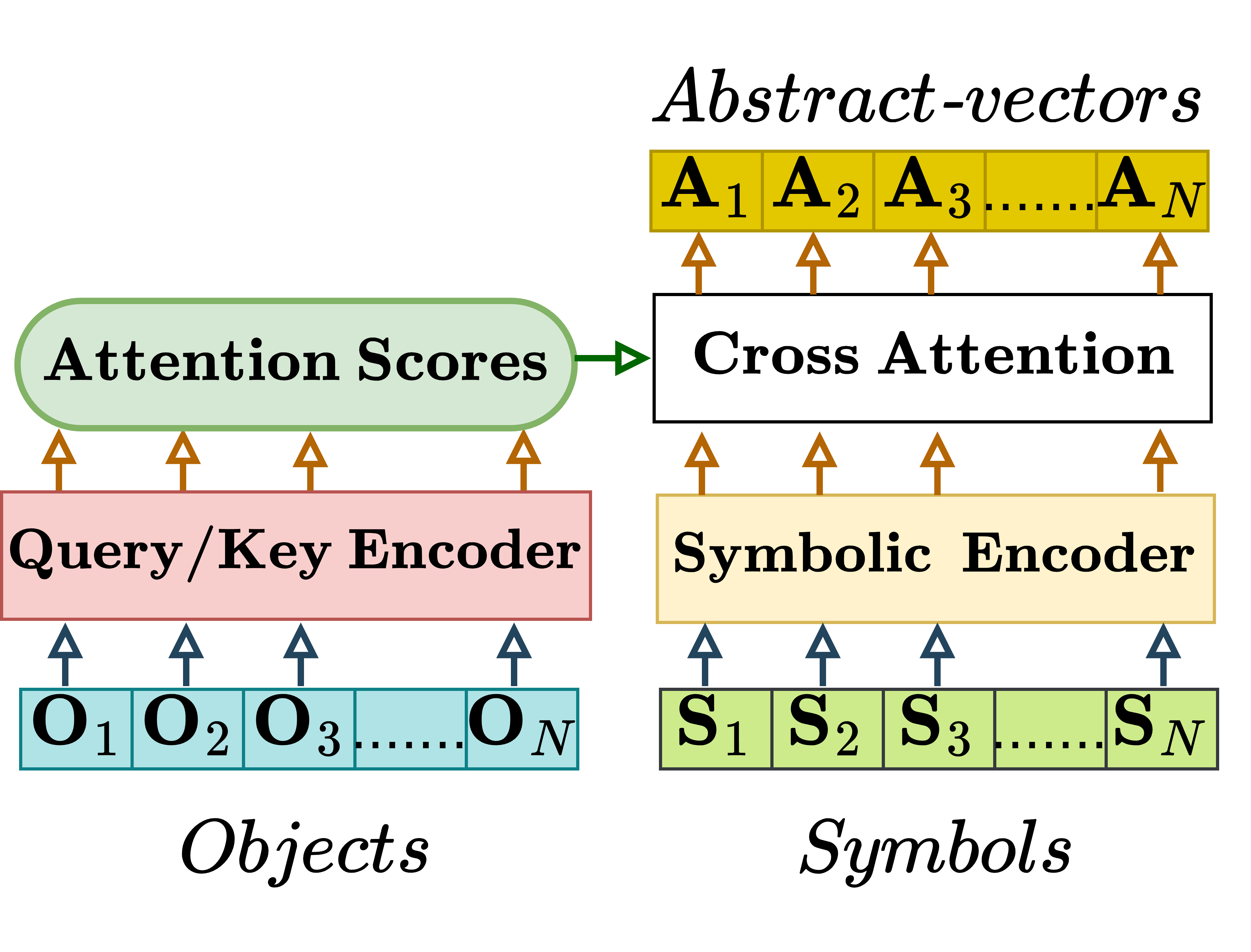}
        \caption{ $\mathrm{RelCrossAttention(O, S)}$}
        \label{fig:abstractor}
    \end{subfigure}\hfill
    \caption{Relational cross-attention and self-attention. \textit{We show a single head of multi-attention}. In blue, the \textit{objects} related by $r(.,.)$; in green, the inductive bias denoted by \textit{symbols}}
    \label{fig:overview}
\end{figure}
Figure \ref{fig:overview} shows a single head for different attention mechanisms from prior work: Figure \ref{fig:transformer} refers to the regular \textit{transformer self-attention mechanism} applied to a sequence of objects $O_{1..N}$ as in \cite{vaswani2017attention}. The Query/Key Encoder in Figure. \ref{fig:transformer} uses learnable projections $\phi_Q : O \mapsto O\cdot W_Q $ and $\phi_K : O \mapsto  O\cdot W_K$. A self attention score matrix is computed from these projections. The latter matrix is used as weights for the value vectors derived from the objects $O$ using a learnable projection function $\phi_V : O \mapsto O\cdot W_V $ to generate encoded vectors $(E_{1},\ldots,E_{N})$. 
Equation (3) summarizes the \textit{multi-head self-attention} mechanism applied to a sequence of objects $O_{1..N} = \left[ O_1,\ldots O_N \right]$ across $H$ heads. Here, $W_o$ constitutes a weighted summation of the outputs of H different self-attention heads. 
$Self Attention(O) = \mathrm{concat}(\hat{E}^{1}, \ldots, \hat{E}^{H}) \cdot W_o, \text{where} \; \; \hat{E}^i = [E_1^i, E_2^i, \ldots, E_N^i] = \mathrm{Softmax}((O \cdot W_q^i) (O \cdot W_k^i)^\top) (O \cdot W_v^i), 1 \leq i \leq H  (3)$   


The work of \cite{esbn} was the first to propose the use of segregated datapaths for processing keys and values with the goal of processing relational information distinctly from object level features. This led to the concept of the Abstractor \cite{abstractor} in which the self-attention mechanism of a transformer was adapted to allow a relational attention mechanism to be implemented and integrated with symbolic processing.
Figure \ref{fig:abstractor} shows a $\mathrm{RelCrossAttention}$ mechanism \cite{abstractor}  (for a single head) that replaces object-level features (Values $V$) with abstract symbols denoted as $S \in \mathbb{R}^{N \times F}$.The symbols are projected into value vectors through a projection function: $\phi_O: S\mapsto S\cdot W_O $
We have $Q \leftarrow O$, $K \leftarrow O$, $V \leftarrow S$. In this case, $\sigma_{\mathrm{rel}}$ refers to the Softmax function. The formalism for multi head attention is $mathrm{RelCrossAttention}(O, S) = \mathrm{concat}(A^1, \ldots, A^H) W_o, \text{ where } A^i = \sigma_{\mathrm{rel}}((O \cdot W_q^i) (O \cdot W_k^i)^\top) S \cdot W_O^i.)
$
 \section{LARS-VSA: Attention Mechanism With Hypervector Bundling}


Our hyperdimensional symbolic attention mechanism aims to extract abstract relationships $r(.,.) \in \mathbb{R}^{H}$ between objects $O_1$,...,$O_n$. These abstract relations are modeled by a pairwise relationship (i.e \textit{inner product}) between encoded objects (using learnable query and key encodings) $\{r(O_i,O_j)\}_{i,j \in \llbracket 0,N \rrbracket}$ \cite{abstractor}. 
However, the query and key embedding in prior work are represented in a low dimensional space (i.e, dimension less than hundred). In this research, the objects are projected onto a \textit{high D-dimensional hyperspace} (i.e $D \geq 1000$) with data represented as \textit{hypervectors}. In \cite{menet2024mimonets}, it is shown that as the dimensionality of hypervectors increases, they become \textit{orthogonal}, making a relational modeling approach such as \cite{abstractor} difficult to apply in the hyperdimensional space. Applying inner product to orthogonal vectors leads to very low attention scores that do not reflect the actual object correlations and relationships. One way to address this issue is to model the relationship between objects \textit{indirectly} by measuring their correlation to a \textit{local context}. To illustrate this, consider two objects, \(A\) and \(B\). Instead of examining the explicit correlation between objects \(A\) and \(B\),
we assess the correlation/relationship between the object \(A\) and a local context formed by the sequence "\(AB\)". 

The representation of a sequence of objects in a hyperdimensional space is well defined using the \textit{bundling} operation (i.e., \(\oplus\)) between sequence elements. Given two objects \(O_i\) and \(O_j\), we project them onto a hyperspace using a set of $N$ learnable projection functions: $\{{{\phi_{B}}_i}\}_{i=0}^{i=N}: \mathbb{R}^{F} \rightarrow \mathbb{R}^D$, for a system with $N$ objects. Hence the relation $r(O_i,O_j)$ can expressed as in Equation \ref{hdcor}.
 \begin{equation}
     r(O_i,O_j) = \left(f({\phi_{B}}_i(O_i),{\phi_{B}}_j(O_j)\right) \in \mathbb{R}
     \label{hdcor}
 \end{equation}

The function $f: \mathbb{R}^{D}\times \mathbb{R}^{D} \rightarrow \mathbb{R}$ implements the \textit{indirect} correlation we discussed earlier. for ${h_{O}}_1,{h_{O}}_2 \in \mathbb{R}^D$ we have 
\begin{equation}
    f({h_{O}}_1,{h_{O}}_2) =  cos({h_{O}}_1,{h_{O}}_1 \oplus {h_{O}}_2) = \frac{\langle {h_{O}}_1,{h_{O}}_1 \oplus {h_{O}}_2\rangle}{D}
\end{equation}
here we define \textit{bundling} (i.e,  $\oplus$) as $h_1 \oplus h_2 = \mathrm{sign}(h_1+h_2)$. $\operatorname{sign}(x) = -\mathds{1}_{\{x < 0\}} + \mathds{1}_{\{x > 0\}}$ replaces the binary coordinate-wise majority in the bipolar domain\cite{sign}. Considering all pairs of objects, we construct a \textit{relational matrix} consisting of $R$ = $[ r(O_i,O_j)] \in \mathbb{R}^{N \times N}$. The bundling operation implicitly acts as a majority vote between two object elements. It captures the \textit{dominant/relevant} features of an object sequence rather than just common features. Indeed, the sign of each hypervector element follows the sign of the element with higher magnitude which is amplified by dominant features of object sequences during training.
Figure\ref{fig:local_hd_attention} illustrates the pipeline for HDSymbolicAttention from end to end applied to objects $O_{1..N}$. The objects were first mapped to hypervectors ${h_{O}}_{1..N}$ through  learnable projection function $\phi_{H}: O \mapsto O\cdot W_{H}$. We extract the attention scores $r_{ij} = f({h_{O}}_i,{h_{O}}_j) =  cos \left({h_{O}}_{i},{h_{O}}_i\oplus {h_{O}}_j \right)$ from the hypervectors ${h_{O}}_{i}$ and ${h_{O}}_j$. This means that the hypervector ${h_{O}}_i$ attention score is computed with respect to its context illustrated in ${h_{O}}_i\oplus {h_{O}}_j$. these attention scores are then forwarded to a softmax function and we used to compute a weighted sum of ${h_{O}}_{i}$. We should notice that each hypervector ${h_{O}}_{i}$ mainly contributes to its corresponding output hypervector since $r_{ii} \leq r_{ij} \forall i \neq j$. We should also notice that although the cosine similarity is theoretically between -1 and +1, since we are working in hyperdimensional space, the cosine similarity is less likely to be negative\cite{kim2021cascadehd}.       


 \begin{wrapfigure}{L}{0.45\textwidth}
	\vskip-5pt
	\begin{tabular}{c}
	\includegraphics[width=0.45\textwidth]{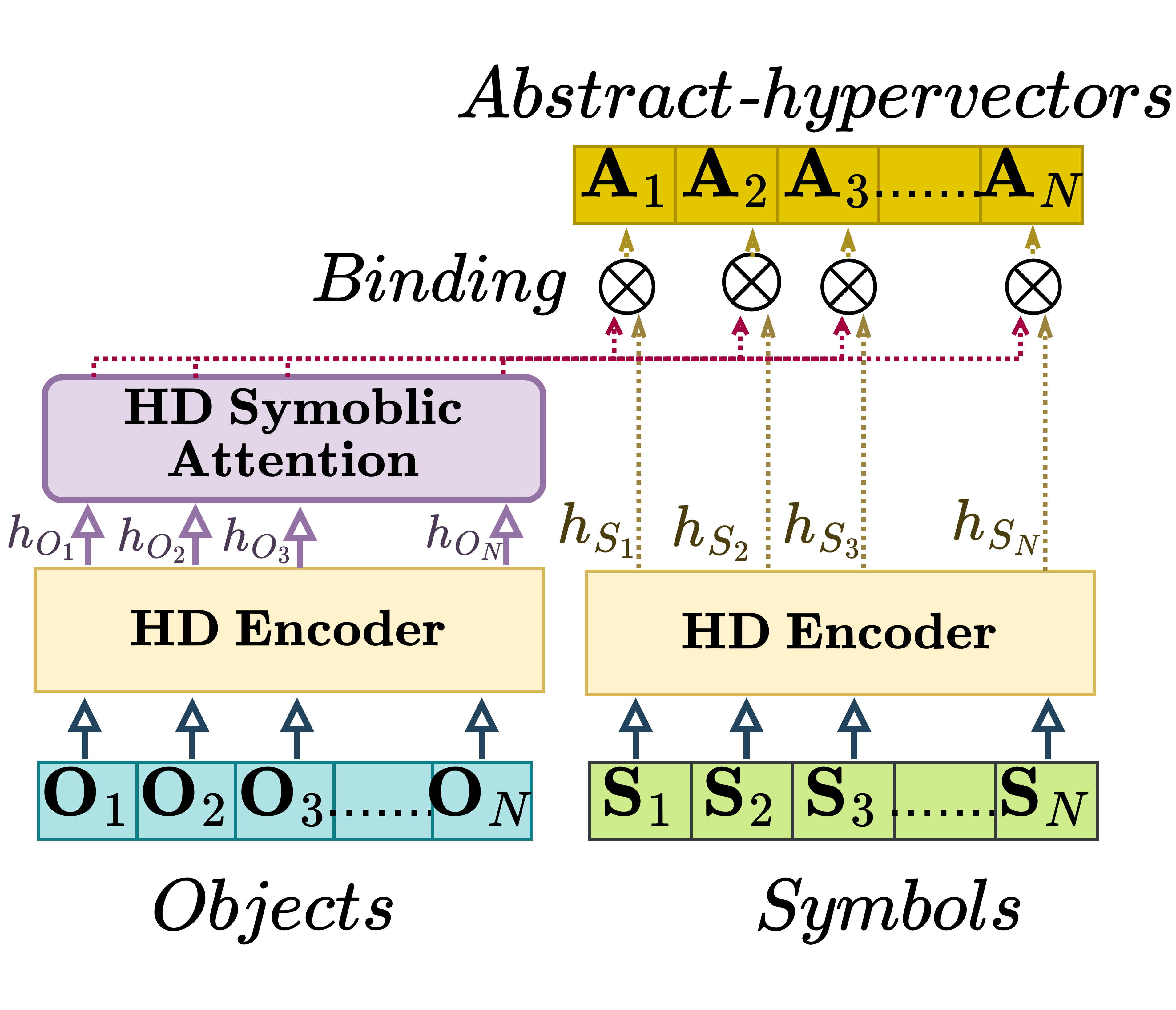}\\[-5pt]
	\end{tabular}
	\caption{  $\mathrm{HDSymbolicAttention(O)}$ $\otimes$ $h_S$}\label{fig:hd}
 \label{hdatten}
 \vspace{-0.1in}
\end{wrapfigure}

\section{LARS-VSA: Symbolic Reasoning with Hypervectors}
\label{lars}

Our proposed system shown in Figure \ref{fig:hd} aims to resolve the relational bottleneck problem in a high dimensional space using an explicit \textit{binding} mechanism. Similar to \cite{abstractor}, the \textit{object representations are separated from the trainable symbol representations} to minimize interference and construct abstract relationships between objects. A set of objects $O_{1..N} = \left[O_{1},\ldots O_{N}\right]$ is forwarded to an \textit{HD Encoder} function ($\phi_{B}:  O\mapsto O\cdot W_{B}$) where $W_{B} \in \{-1,+1\}^{F\times D}$ is a learnable bipolar projection matrix  projection that generates a set of high dimensional vectors $h_{O_{1..N}} = \left[ h_{O_1},\ldots h_{O_N} \right]$. The generated set of high dimensional vectors  is then forwarded to an \textit{HD Symbolic attention mechanism} that produces a new set of encoded high dimensional vectors. These vectors are then \textit{bound} with the symbolic high dimensional vectors $h_{S_{1..N}}= S_{1..N}\cdot W_{B}$. The final output is a set of abstract high dimensional vectors $A_{i..N} \in \mathbb{R}^{N\times D}$ that feeds into other layers of the network. 

Figure \ref{fig:local_hd_attention} illustrates how the $\mathit{HDSymbolicAttention}$ is applied to the set of objects $O_{1..N}=\left[ O_{1},\ldots O_{2}\right]$ of Figure \ref{hdatten}. Each object $O_{i}, i\in \llbracket1..N\rrbracket$ is mapped to a high dimensional vector ${h_{O}}_{i}$ called \textit{object hypervector} through a learnable projection function $\phi_{B}$. All the objects and symbols are mapped to hypervectors using the \textit{same} projection function. This set of object hypervectors is used to compute the attention scores matrix (before softmax normalization) $\left[ R_{1},\ldots R_{N}\right] \in \mathbb{R}^{N\times N}$. Each set of attention scores $R_{i}, i\in \llbracket1..N\rrbracket$ is forwarded to a softmax function $\sigma$. Each encoded object hypervector (i.e, the output of the HDSymbolicAttention module at a node $i$) is a weighted summation of all the \textit{object hypervectors}. The weights above, are the set of attention scores $\sigma(R_{i})$. Each element $r_{ij}$ of the attention scores $R_{i} = \left[ r_{i1},..,r_{iN}\right] \in \mathbb{R}^{N}$ represents the correlation (i.e, similarity or angle in the context of hyperdimensional computing) between the object hypervector ${h_{O}}_{i}$ and the \textit{context hypervector} defined as a ${h_{O}}_{i}$ \textit{bundled} ${h_{O}}_{j}$ together ( ${h_{O}}_{i}\oplus{h_{O}}_{j}$) (i.e, $r_{ij} =  cos\left({h_{O}}_{i},{h_{O}}_{i}\oplus{h_{O}}_{j}\right)$)  
For a single head, the mechanism illustrated in Figure\ref{fig:hd} can be written as:
\begin{equation}
    \begin{split}
    \mathrm{HDSymbolicAttention(O_{1..N})\otimes h_{s}} &= \left[\sigma(\left[R_1,...R_N\right])\cdot \left( O_{1..N}.W_{B}\right)^{T})\right] \otimes \left(S_{1..N}.W_{B}\right)  
    \end{split}
\end{equation}
We note that the HDSymbolicAttention mechanism illustrated in   Figure\ref{fig:local_hd_attention} and Figure\ref{fig:hd} refers to a \textit{single} head of a multi head attention. A multi head attention version was used in the prototype implemented in this research.  
 \begin{figure}[ht]
 \vskip-5pt
\begin{center}

    \includegraphics[width=0.8\linewidth]{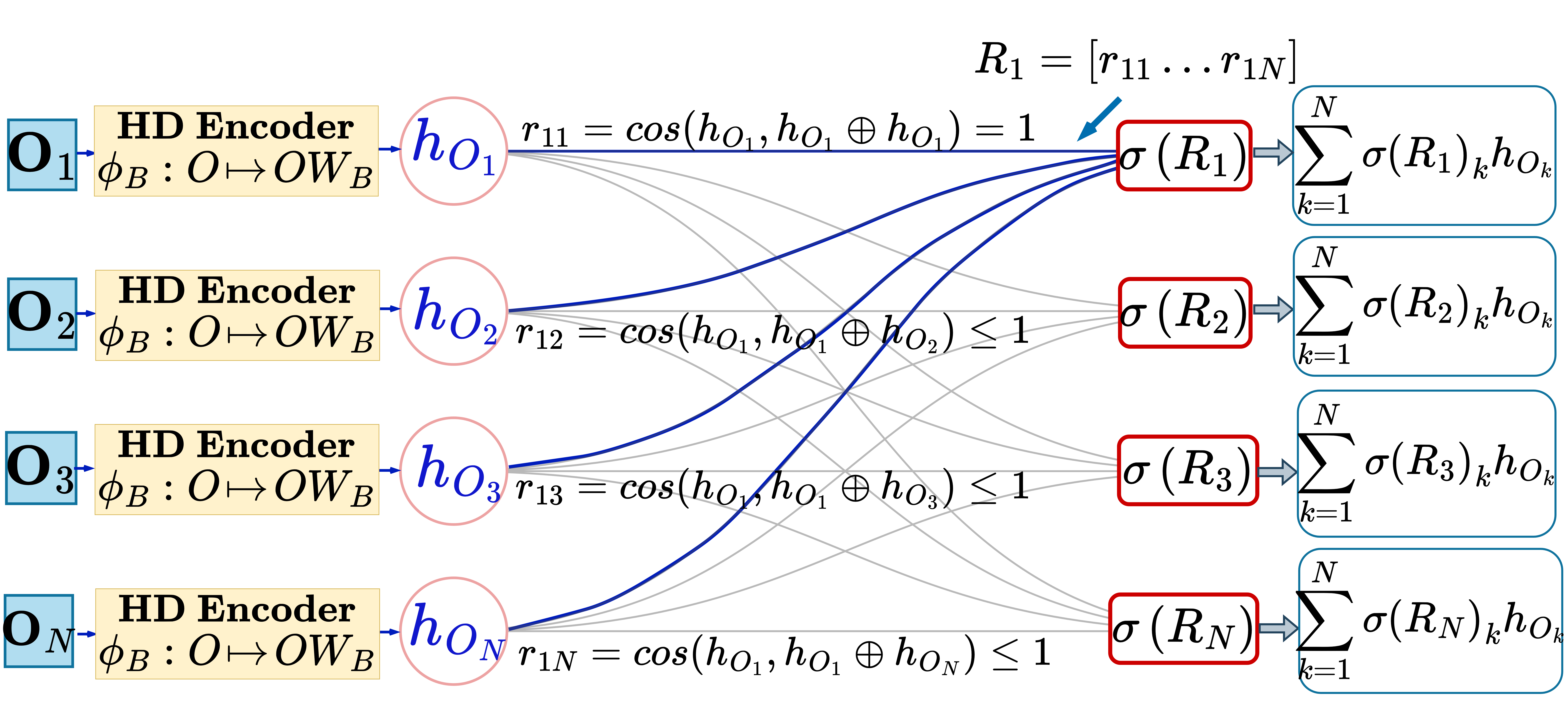}
    \caption{$\mathrm{HDSymbolicAttention}(O_{1..N})$}
    \label{fig:local_hd_attention}
    
\end{center}
\vspace{-0.3in}
\end{figure}

\subsection{Binarized HD attention score}
Calculating attention scores is the most energy-intensive operation in self-attention \cite{ham20203}. We tackle this in our HDSymbolicAttention mechanism by taking advantage of the fact that binarization of the dot product in attention score calculation allows up to\textit{2x memory savings and $60\%$ more speedup} with low accuracy loss \cite{xnornet} over real number multiply-accumulate computations used in transformer arithmetic.

Figure\ref{fig:Bhd1} illustrates an example of our proposed binarized HDSymbolicAttention mechanism between two object hypervectors $h_{{O}_1}$ and $h_{{O}_2}$ involving well defined operations (i.e, binary AND, $L_{0}$ norm, element-wise real addition). The attention mechanism is expressed using the cosine similarity function $cos$ as follows: \ref{hdlocatt}
$
    f(h_{{O}_1},h_{{O}_2}) = cos(h_{{O}_1},h_{{O}_1}\oplus h_{{O}_2})
    \label{hdlocatt}$

 \begin{wrapfigure}{R}{0.4\textwidth}
	\vskip-5pt
	\begin{tabular}{c}	\includegraphics[width=0.4\textwidth]{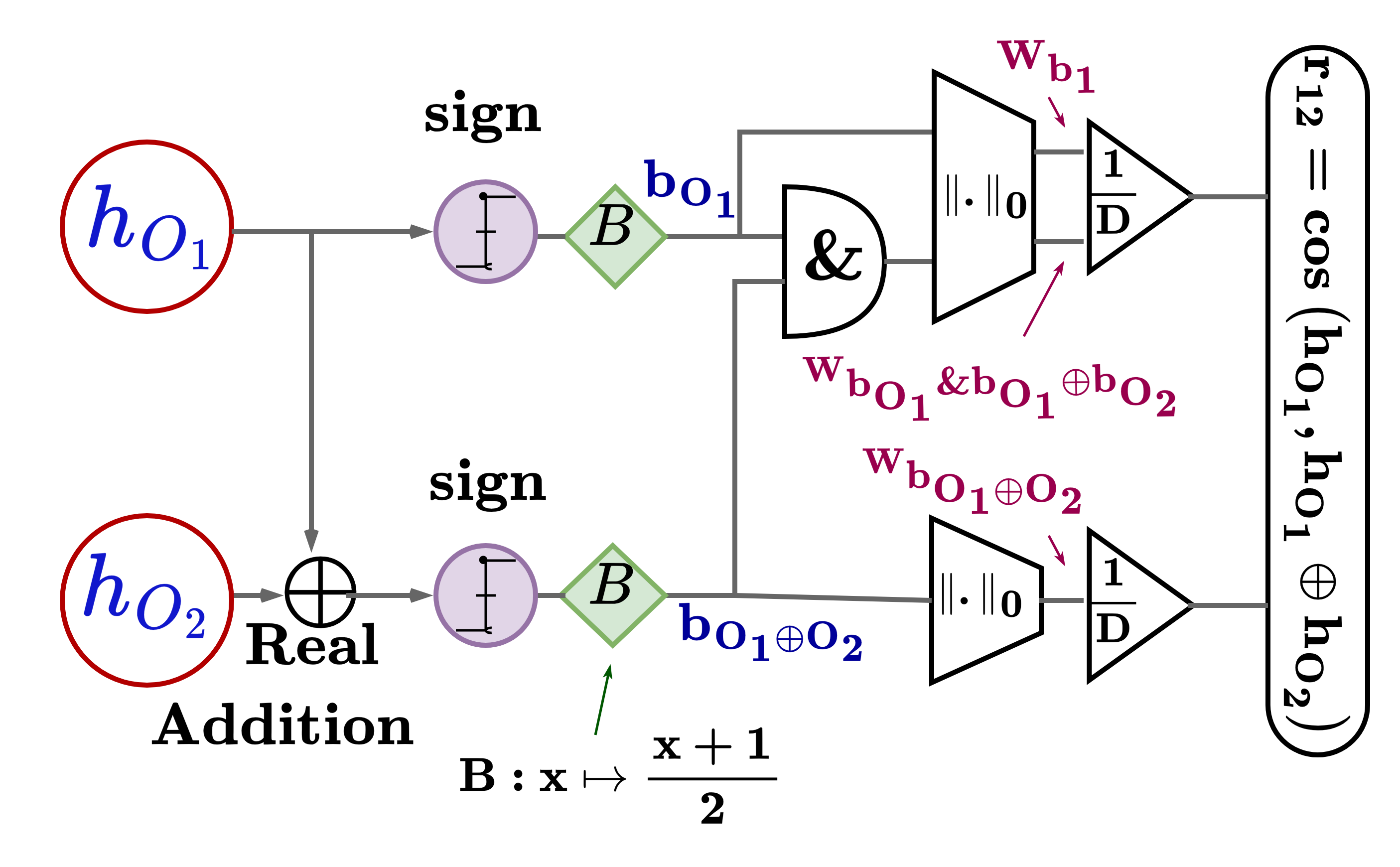}\\[-5pt]
	\end{tabular}
	\caption{Binarized Attention Score Function $r_{12} = cos(h_{{O}_1},h_{{O}_1}\oplus h_{{O}_2})$}\label{fig:Bhd1}
\end{wrapfigure}

Let $b_{{O}_{1}}$ be the binary version of the bipolar hypervector derived from $h_{{O}_{1}}$ by applying a $sign$ function and a function $B(.)$ (i.e., a shift and scale). The binary vector $b_{O_{1}\oplus{O_{2}}}$ is obtained from the same process as $b_{{O}_{1}}$ but applied to $h_{{O}_{1}}\oplus{h_{{O}_{2}}}$. Both $b_{{O}_{1}}$ and $b_{O_{1}\oplus{O_{2}}}$ are forwarded to an element-wise AND gate, and we derive the $L_{0}$ norm of the generated vector to get $\mathrm{w}_{{{b_{O}}_{1}}\&{b_{O_{1}\oplus{O_{2}}}}}$. Similarly, we obtain $\mathrm{w}_{{b_{O}}_{1}}$ and $\mathrm{w}_{{b_{O_{1}\oplus{O_{2}}}}}$. Lemma \ref{equi_cos} and the notation in Figure \ref{fig:Bhd1} allow us to simplify the function $r_{12} = f({h_{O}}_{1},h_{{O}_{2}})$ using binary operations according to Equation \ref{eq:binary}
\begin{equation}
     f(h_1,h_2) =  \frac{1}{D}\left( 4\mathrm{w}_{{{b_{O}}_{1}}\&{b_{O_{1}\oplus{O_{2}}}}} - 2\mathrm{w}_{{b_{O}}_{1}} - 2\mathrm{w}_{{b_{O}}_{1}}\right) +1 
     \label{eq:binary}
\end{equation}

\begin{lemma}\label{equi_cos}
Given two D-dimensional vectors $h_1$ and $h_2$  that are \textit{bipolar} meaning ${h_{1/2}}_i \in \{ -1,+1\} \; \; \forall i \in \llbracket 1, D \rrbracket$ and let $B x: \mapsto \frac{x+1}{2}$ maps bipolar words to binary domain, $\mathrm{\&}$ is the binary 
AND and $\norm{.}_0$ is the zero-Norm. then cosine similarity between $h_1$ and $h_2$ can be expressed as following \ref{bin_ham}  
\begin{equation}
    \mathrm{cos}(h_1,h_2) = 1 + \frac{1}{D}\left[ 4 {\left(B(h_1) \mathrm{\&} B(h_2)\right)}_0 -2{ \left(B(h_1)\right)}_0 - 2 { \left( B(h_2)\right) }_0 \right]
    \label{bin_ham}
\end{equation} 
\end{lemma}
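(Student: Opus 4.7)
The plan is to reduce the cosine similarity to an inner product and then translate that inner product into the binary domain coordinate-by-coordinate. Since both $h_1$ and $h_2$ are bipolar, every coordinate satisfies $h_{k,i}^2 = 1$, so $\|h_1\|_2 = \|h_2\|_2 = \sqrt{D}$. Consequently
\begin{equation*}
\cos(h_1,h_2) \;=\; \frac{\langle h_1, h_2\rangle}{D} \;=\; \frac{1}{D}\sum_{i=1}^{D} h_{1,i}\, h_{2,i},
\end{equation*}
which is the only place where the bipolar assumption is really used. After this normalization the problem becomes purely algebraic.

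Next I would substitute the inverse of $B$, namely $h_{k,i} = 2 b_{k,i} - 1$ with $b_{k,i} = B(h_k)_i \in \{0,1\}$, into each term of the sum and expand:
\begin{equation*}
h_{1,i} h_{2,i} \;=\; (2 b_{1,i}-1)(2 b_{2,i}-1) \;=\; 4\, b_{1,i} b_{2,i} \;-\; 2 b_{1,i} \;-\; 2 b_{2,i} \;+\; 1.
\end{equation*}
Then I would sum over $i$ and invoke two elementary identities for $\{0,1\}$-valued vectors: the coordinate-wise product coincides with the bitwise AND, i.e.\ $b_{1,i} b_{2,i} = (B(h_1)\,\&\,B(h_2))_i$, and the sum of a binary vector equals its zero-norm, i.e.\ $\sum_i b_{k,i} = \|B(h_k)\|_0$. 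Applying both identities term-by-term converts the sum into $4\|B(h_1)\&B(h_2)\|_0 - 2\|B(h_1)\|_0 - 2\|B(h_2)\|_0 + D$, and dividing by $D$ yields the claimed formula.

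There is no real obstacle here; the one point worth flagging carefully is that the identification $b_{1,i} b_{2,i} = (b_1 \& b_2)_i$ and $\sum_i b_{k,i} = \|b_k\|_0$ both rely critically on the entries being in $\{0,1\}$, which is guaranteed by the definition of $B$ acting on a bipolar vector. Beyond that, the proof is a direct expansion, and the structure of the statement (the constants $+1$, $4$, $-2$, $-2$ and the $1/D$ prefactor) can be read off exactly from the coefficients produced by the expansion of $(2b_{1,i}-1)(2b_{2,i}-1)$ summed over the $D$ coordinates.
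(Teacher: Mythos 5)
Your proof is correct and follows essentially the same route as the paper's: both reduce the cosine to $\tfrac{1}{D}\langle h_1,h_2\rangle$ using bipolarity, substitute $h_k = 2B(h_k)-\mathbbm{1}$, and expand. Your version is slightly more explicit in spelling out the coordinate-wise expansion and the identities $b_{1,i}b_{2,i} = (b_1 \,\&\, b_2)_i$ and $\sum_i b_{k,i} = \norm{b_k}_0$, which the paper compresses into a single step.
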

\begin{proof}
Let $h_1$ and $h_2$ two D-dimensional \textit{bipolar} vectors then their cosine similarity could be expressed as $cos(h_1,h_2) = \frac{\langle h_1,h_2\rangle}{\norm{h_1}_2 \cdot \norm{h_2}_2} \label{cos_sim}$
we have $\norm{h_1}^2_2 = \norm{h_2}^2_2 = \sum_{i=1}^{D} \left( \pm 1\right)^2 = D$.
we also have $h_2 = 2B(h_2)-\mathbbm{1}$ and $h_1 = 2B(h_1)-\mathbbm{1}$, where $\mathbbm{1}$ means a D-dimensional vector with all ones. Hence, \ref{cos_sim} becomes
\begin{equation}
\begin{aligned}
    \cos(h_1,h_2) & = \frac{\langle 2B(h_1)-\mathbbm{1},2B(h_2)-\mathbbm{1} \rangle}{D} 
    = 1+ \frac{4 \norm{B(h_1)\mathrm{\&} B(h_2)}_0- 2\norm{B(h_1)}_0 - 2\norm{B(h_2)}_0}{D} 
\end{aligned}
\end{equation}
\end{proof}
\vspace{-0.4in}
\section{LARS-VSA Architecture}
The LARS-VSA architecture is composed of $H$  HDSymbolicAttention heads (line 2). In (line 3) we apply the HDSymbolicAttention module discussed earlier to the set of objects ${O}_{1..N} = \left( O_1,O_2,...,O_N\right)$ and used the symbol hypervectors $h_{S}^{h-1}$ to get the abstract rules in high dimensional space, $\mathrm{A}_{1..N}^{h-1} = \left( A_{1}, ..., A_{N}\right)$. In line 4, the abstract hypervectors $\mathrm{A}_{1..N}^{h-1}$ are scaled using a BatchNormalization layer. Finally, all the abstract rules vector from different heads are \textit{bundled} together in the real domain through a summation operation. In the line 7, the high dimensional abstract vectors $\mathrm{A}_{1..N}^{H}$ are then compressed using a \textit{Global Average Pooling} layer that reduces the dimensionality of the hypervectors to the original object's dimension. The \textit{holistic} \cite{kanerva2009hyperdimensional} property of hyperdimensional computing ensures none or low information loss after dimensionality reduction for hypervectors, avoiding catastrophic interference.

\begin{algorithm}[ht]
  \caption{LARS-VSA Module}\label{alg:nvsa_architecture}
  \begin{algorithmic}[1]
      \State $\mathrm{A}_{1..N}^{0} \gets \emptyset$
      
      \For {$h \gets 1$ to $H$}
      
      \State $\mathrm{A}_{1..N}^{h-1} \gets \textit{HDSymbolicAttention}(\mathrm{O}_{1..N}) \otimes (h_{S}^{h-1})$
      
      \State $\mathrm{A}_{1..N}^{h-1} \gets \textit{BatchNorm}(\mathrm{A}_{1..N}^{h-1})$
      
      \State $\mathrm{A}_{1..N}^{h} \gets \mathrm{A}_{1..N}^{h} + \mathrm{A}_{1..N}^{h-1}$
      
      \EndFor
      \State $\mathrm{A}_{1..N}^{H} \gets \textit{GlobalAvgPooling}(\mathrm{A}_{1..N}^{H})$      
      \State \textbf{return} $\mathrm{A}_{1..N}^{H}$
  \end{algorithmic}
\end{algorithm}

 \begin{wrapfigure}{R}{0.4\textwidth}
	\begin{tabular}{c}	\includegraphics[width=0.4\textwidth]{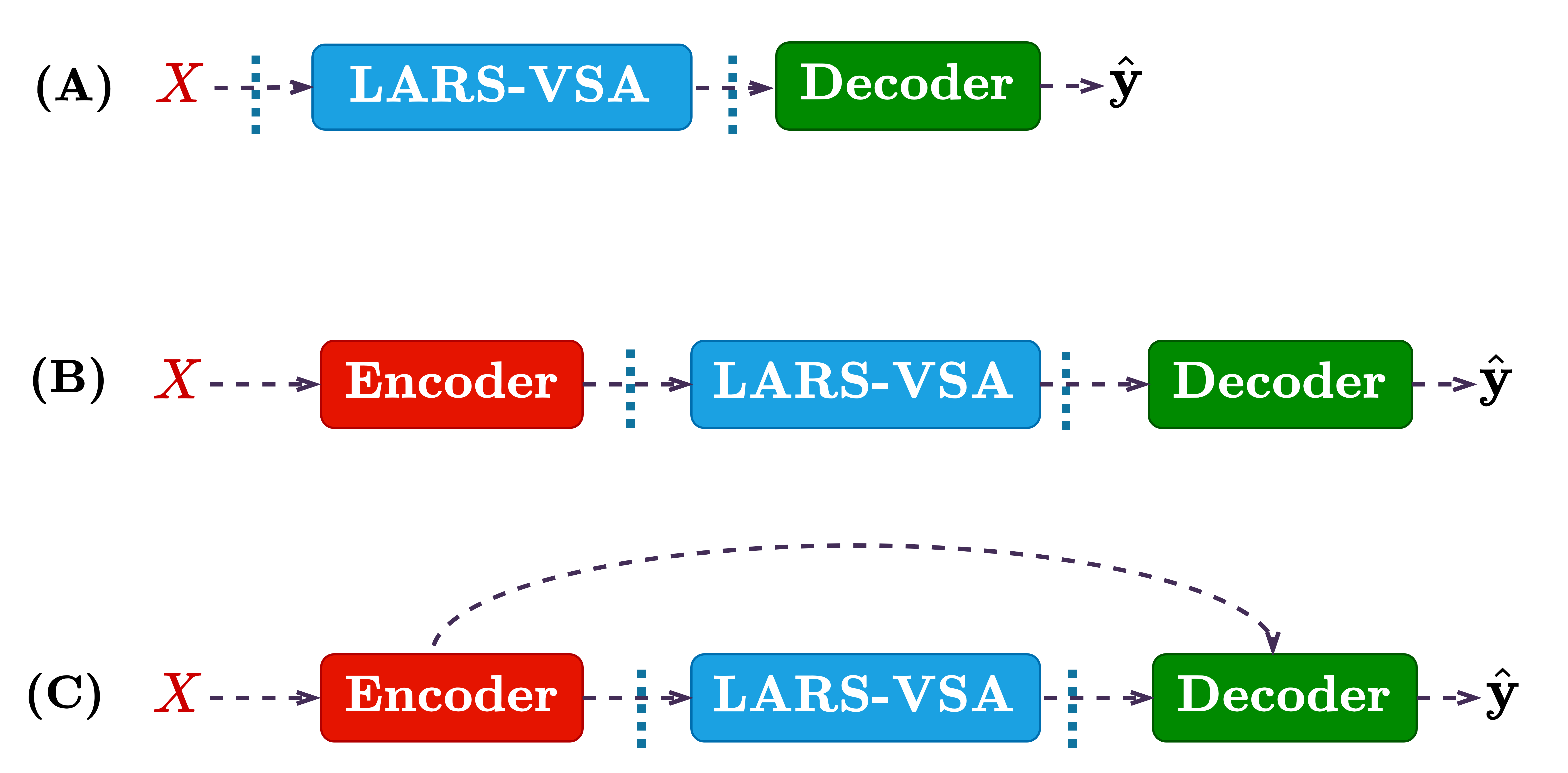}\\[-5pt]
	\end{tabular}
    \caption{Examples of evaluated LARS-VSA based pipelines}\label{fig:architecture}
\end{wrapfigure}

Transformers perform object-level information retrieval and projection onto future outcomes from large volumes of training data, particularly in natural language processing, relying on decoding of relational representations. In \cite{abstractor}, self-attention and cross-attention are utilized to construct a decoder capable of producing NLP-like solution instances. We integrated the decoder structure from \cite{abstractor} into our LARS-VSA pipeline. This excels in both partial as well as purely abstract reasoning tasks, necessitating a specific type of information generation, such as math problem-solving or object sorting. For partially abstract mathematical reasoning tasks, a skip connection between the encoder and decoder is required due to the identical structure of the input and output (both are text), as depicted in structure (C) of Figure \ref{fig:architecture}. However, for purely abstract sorting problems, this skip connection is unnecessary, as illustrated in structure (B) of Figure \ref{fig:architecture}.

For classification-based abstract reasoning tasks, like SET \cite{abstractor} or pairwise ordering \cite{abstractor}, a simple decoder composed of linear layers suffices, as shown in structure (A) of Figure \ref{fig:architecture}. The approach of \cite{abstractor} requires a transformer-based encoder to extract object-level correlations from initial inputs before forwarding them to abstractor modules, despite increasing the computational expenses of the already heavy abstractor pipeline. In our study, we employed a basic BatchNormalization layer followed by dropout to prevent overfitting. The Encoder here functions as a trainable standard scalar preprocessing step. Consequently, object-level correlation and abstraction are primarily provided by the LARS-VSA module.
For sequence to sequence relational reasoning task, we used a cross-attention mechanism derived from \cite{abstractor} between encoded ground truth target and the generated abstract states from the LARS-VSA model.

\section{Experiments}
\subsection{Discriminative Relational Tasks}

\begin{figure}[ht]
    \vskip-.2in
    \centering
    \includegraphics[width=0.8\textwidth]{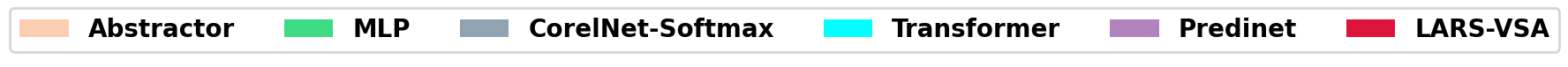}
    \begin{subfigure}[t]{0.3\textwidth}
        \centering\captionsetup{width=.9\linewidth}
        \includegraphics[width=\textwidth]{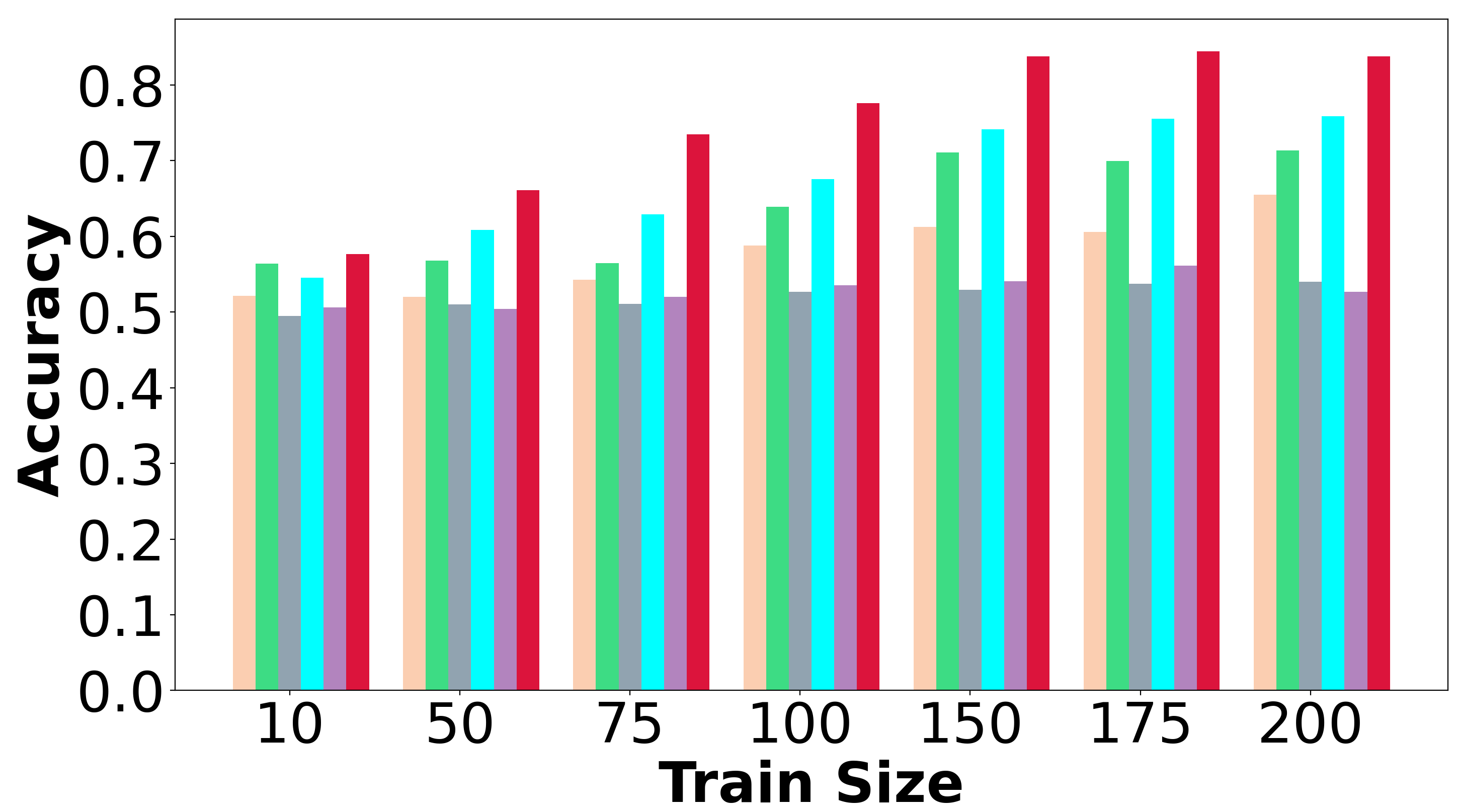}
        \vskip-5pt
        \caption{Learning $\prec$ relation  accuracy}\label{fig:exp_order_relation}
    \end{subfigure}
    \begin{subfigure}[t]{0.32\textwidth}
        \centering\captionsetup{width=.9\linewidth}
        \includegraphics[width=\textwidth]{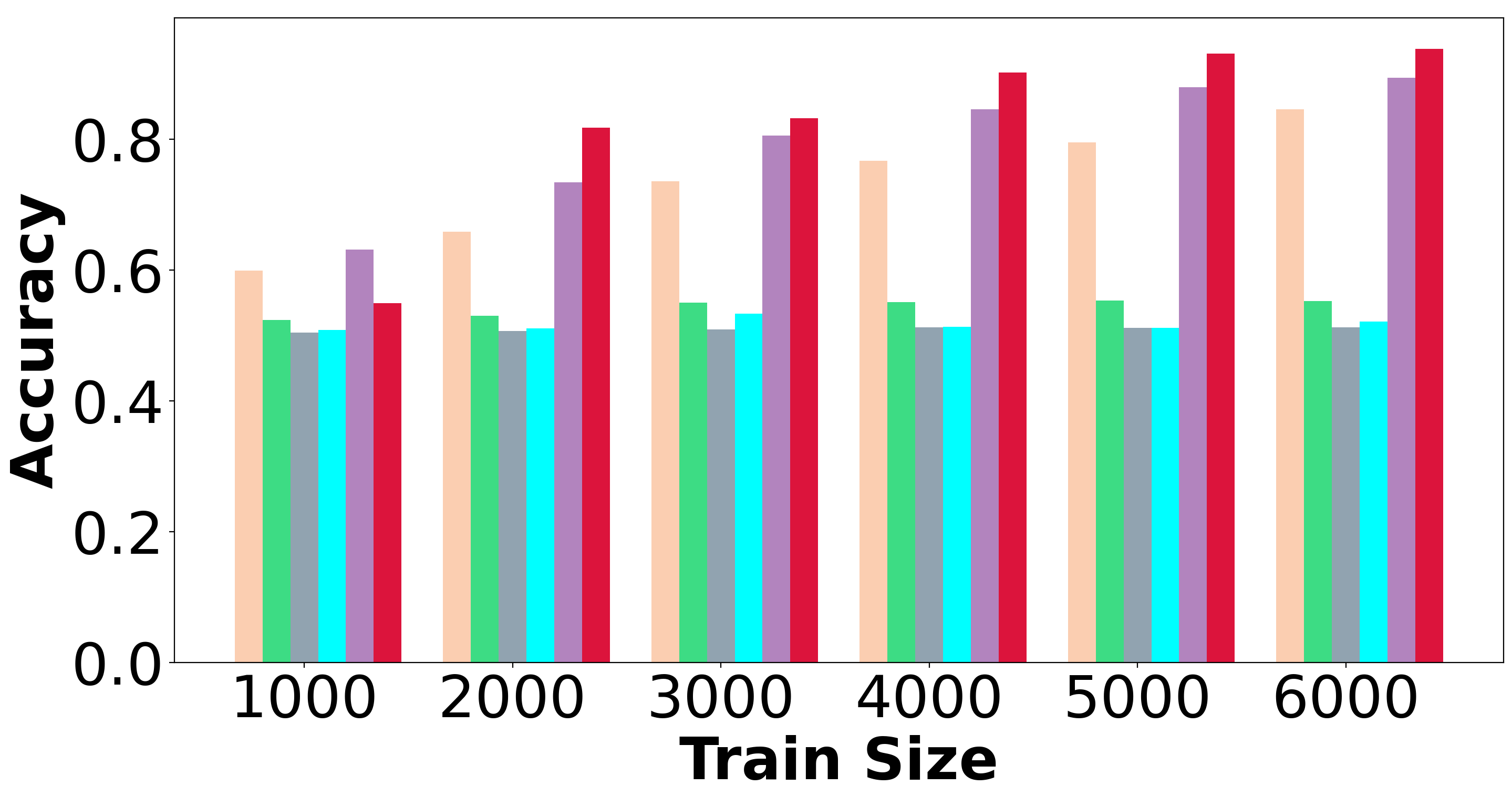}
        \vskip-5pt
        \caption{SET Classification accuracy}\label{fig:exp_set_classification}
    \end{subfigure}
    \begin{subfigure}[t]{0.32\textwidth}
        \centering\captionsetup{width=.9\linewidth}
        \includegraphics[width=\textwidth]{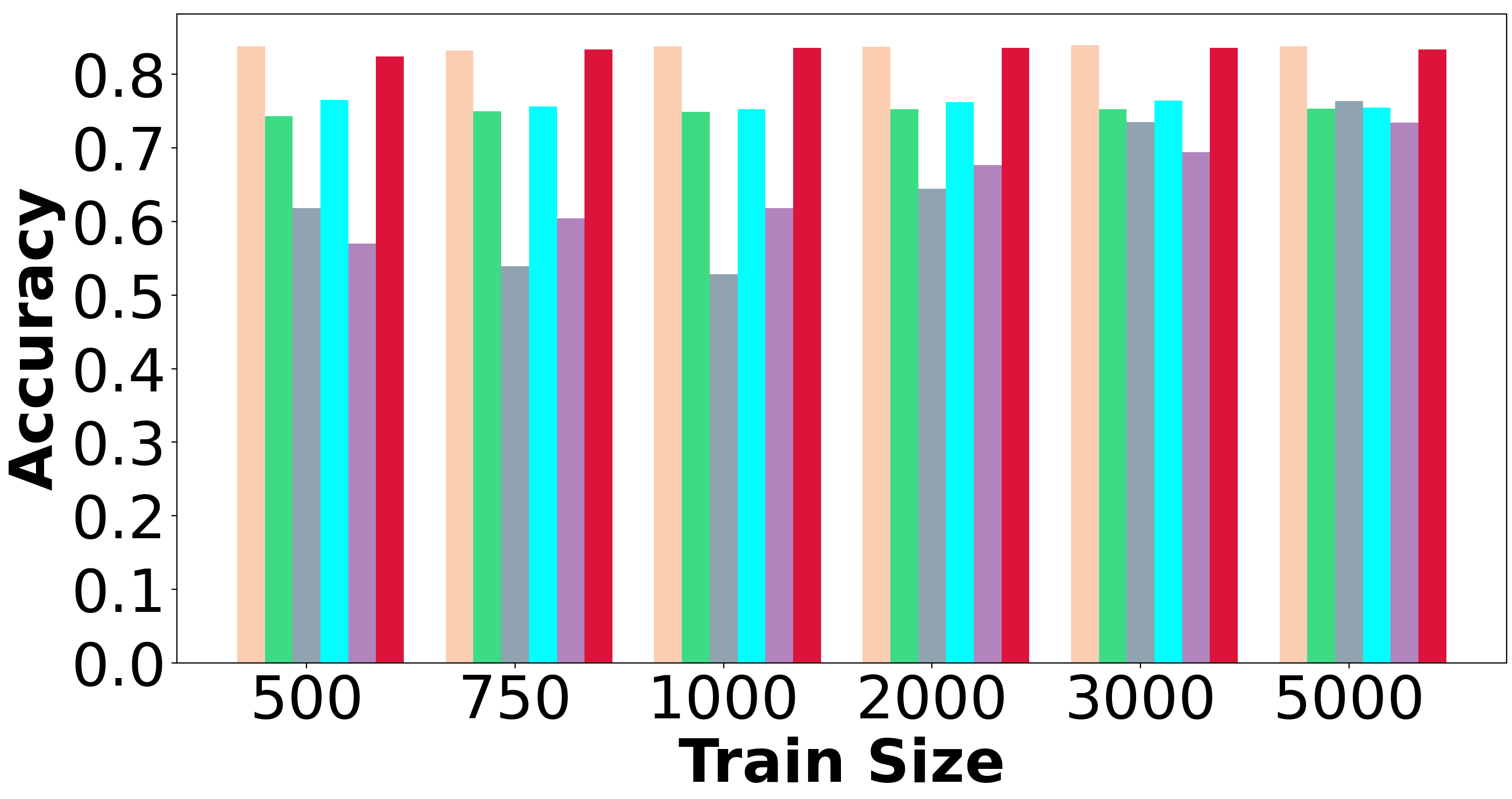}
        \vskip-5pt
        \caption{Learning Identiy Rule  accuracy}\label{fig:exp_ABA}
    \end{subfigure}
    \caption{Experiments on discriminative relational tasks and comparison to SOTA.}
    \vskip-10pt
\end{figure}
\paragraph{Order relations: modeling asymmetric relations.}
\begin{wrapfigure}{R}{0.23\textwidth}
	\vskip-5pt
	\begin{tabular}{c}
		\includegraphics[width=.23\textwidth]{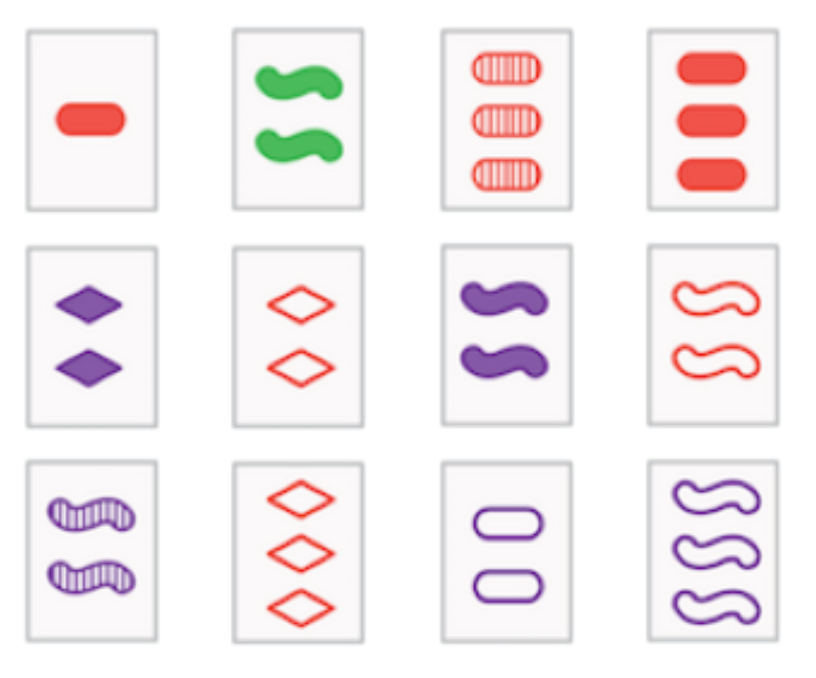}\\[-5pt]
	\end{tabular}
	\caption{\footnotesize The SET game}
 \label{set_figure}
\end{wrapfigure}

We generated 64 random objects represented by iid Gaussian vectors \( o_i \sim \mathcal{N}(0, I) \in \mathbb{R}^{32} \), and established an anti-symmetric order relation \( o_1 \prec o_2 \prec \cdots \prec o_{64} \). From 4096 possible object pairs \((o_i, o_j)\), 15\% are used as a validation set and 35\% as a test set. We train models on varying proportions of the remaining 50\% and evaluate accuracy on the test set, conducting 10 trials for each training set size. The models must generalize based on the transitivity of the \( \prec \) relation using a limited number of training examples. The training sample sizes range between 10 and 200 samples. Figure \ref{fig:exp_order_relation} demonstrates the high capability of LARS-VSA to generalize with few examples, achieving over $80\%$ accuracy with just 200 samples ($1.07$x better than the second best model and $1.33$x better than Abstractor). The Transformer model is the second best performer, better than the Abstractor and CorelNet-Softmax due to the relatively moderate level of abstraction needed for learning asymmetric relations. 

\paragraph{\textit{SET}: modeling multi-dimensional relations.}
In the SET\cite{abstractor} task, players are presented with a sequence of cards, each varying along four dimensions: color, number, pattern, and shape. A triplet of cards forms a "set" if they either all share the same value or each have a unique value (as in Figure \ref{set_figure}). In this experiment, the task is to classify triplets of card images as either a "set" or not.
The shared architecture for processing the card images in all baselines as well as LARS-VSA is $\texttt{CNN} \to \{\cdot\} \to \texttt{Flatten} \to \texttt{Dense}$, where $\{\cdot\}$ is one of the aforementioned modules. The CNN embedder is obtained through a pre-training task. We compared the LARS-VSA architecture to CorelNet \cite{kerg2022neural} with a softmax and a ReLU activation function, PrediNet \cite{shanahan2020explicitly}, Abstractor \cite{abstractor}, MLP \cite{abstractor}, and a Transformer \cite{vaswani2017attention} architecture. For this specific task, there are four relational representations (e.g., shape, color, etc.) and one abstract rule (whether it is a triplet or not). Figure \ref{fig:exp_set_classification} shows LARS-VSA outperforms all the baselines (more than $1.05$x better than the second best model and $1.11$x better than Abstractor when then training on 6000 samples), as it balances abstract rules with relational representations. In this particular case, PrediNet also shows high accuracy. Its relational vectors are less connected to object-level features than those of Transformers but more than those of the Abstractor. 
\paragraph{\textit{ABA}: Modeling Identity Relations}
\begin{wrapfigure}{R}{0.32\textwidth}
	\vskip-5pt
	\begin{tabular}{c}
		\includegraphics[width=.32\textwidth]{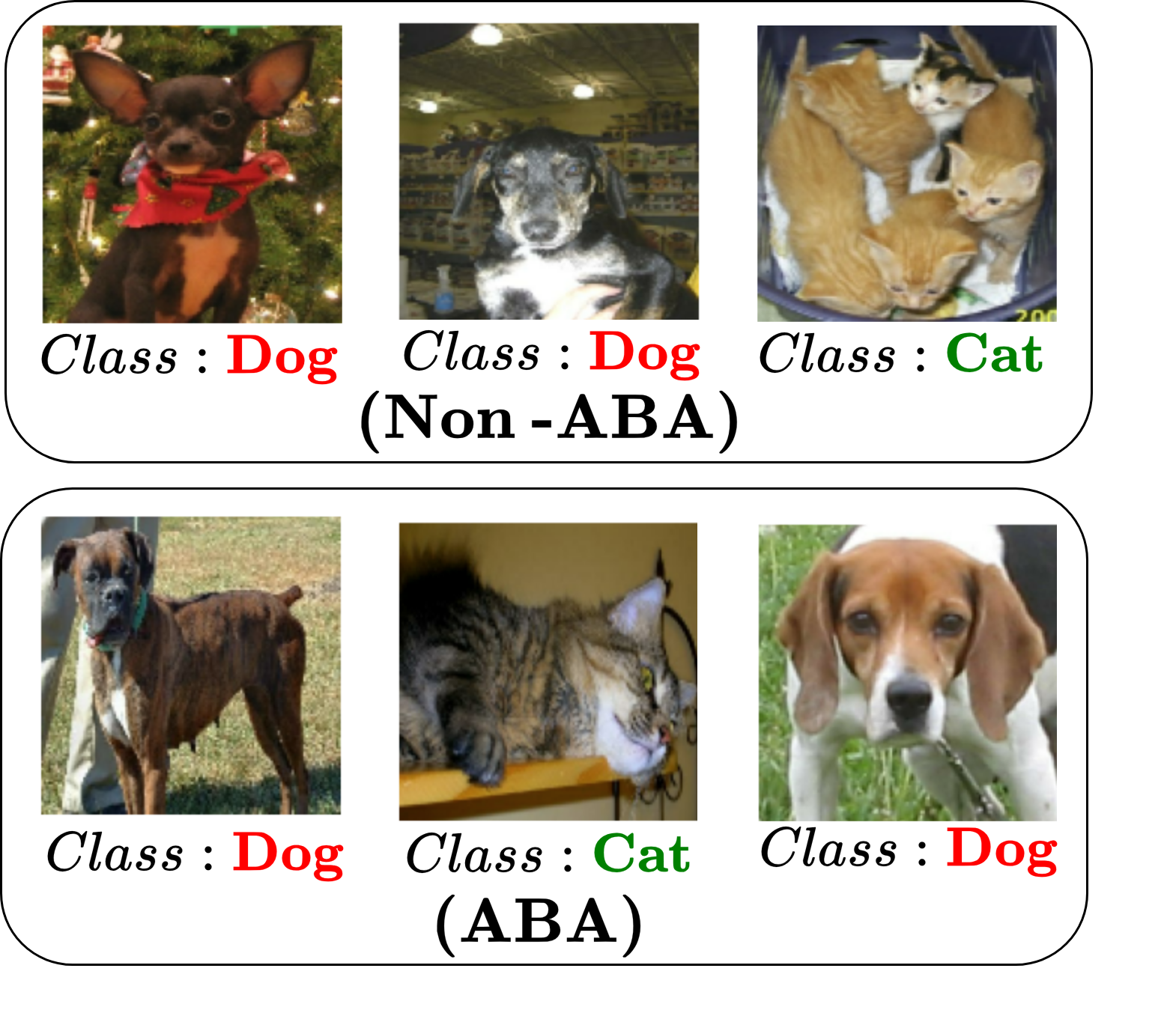}\\[-5pt]
	\end{tabular}
	\caption{\footnotesize ABA Rule Identification Example}
 \vspace{-0.2in}
\end{wrapfigure}
The ABA experiment \cite{webb2024relational} aims to identify an abstract pattern in a sequence of three images from an image classification dataset, specifically the cats and dogs dataset \cite{gong2012computer}. The ABA rule requires the system to determine whether the sequence follows a "cat-dog-cat" or "dog-cat-dog" pattern. The images are processed using a pretrained VGG-like CNN (detailed in the Appendix) to extract the feature map corresponding to the final dense layer. We generated 20000 sequences of 3-image feature maps $o_{i} \in \mathbb{R}^{512} \; \forall \; 1\leq i\leq 3$. The dataset is split into 50\% for training, 25\% for validation and 25\% for testing. In this experiment, Figure \ref{fig:exp_ABA} shows that the Abstractor and LARS-VSA show highest performance (i.e, with comparable accuracy) compared to the state of the art. The difference between this task and the two prior tasks is that the relational representation and the abstract rule (ABA) are harder to correlate since the objects are derived from real images. 

\subsection{Object-sorting: Purely Relational Sequence-to-Sequence Tasks}\label{object_sorting}

We generate random objects as follows. First, we create two sets of random attributes: $\mathcal{A} = {a_1, a_2, a_3, a_4}$, where $a_i \overset{iid}{\sim} \mathcal{N}(0, I) \in \mathbb{R}^{4}$, and $\mathcal{B} = {b_1, \ldots, b_{12}}$, where $b_i \overset{iid}{\sim} \mathcal{N}(0, I) \in \mathbb{R}^{8}$. Each set of attributes has a strict ordering: $a_1 \prec a_2 \prec a_3 \prec a_4$ for $\mathcal{A}$ and $b_1 \prec b_2 \prec \cdots \prec b_{12}$ for $\mathcal{B}$.

Our random objects are formed by taking the Cartesian product of these two sets, $\mathcal{O} = \mathcal{A} \times \mathcal{B}$, resulting in $N = 4 \times 12 = 48$ objects. Each object in $\mathcal{O}$ is a vector in $\mathbb{R}^{12}$, created by concatenating one attribute from $\mathcal{A}$ with one attribute from $\mathcal{B}$.

We then establish a strict ordering relation for $\mathcal{O}$, using the order relation of $\mathcal{A}$ as the primary key and the order relation of $\mathcal{B}$ as the secondary key. Specifically, $(a_i, b_j) \prec (a_k, b_l)$ if $a_i \prec a_k$ or if $a_i = a_k$ and $b_j \prec b_l$. We generated a randomly permuted set of 5 and a set of 6 objects in $\mathcal{O}$. The target sequences are the indices representing the sorted order of the object sequences (i.e., obtained using 'argsort'). The training data are uniformly sampled from the set of length-$N$ (i.e, $N \in \{ 5,6 \}$) sequences in $\mathcal{O}$. Additionally, we generate non-overlapping validation and testing datasets in the following proportion: $20\%$ for testing, $10\%$ for validation and $70\%$ for training.   

We used \textit{element wise accuracy} to assess the performance of LARS-VSA, also used in \cite{abstractor}. The accuracy of LARS-VSA is compared against the Relational Abstractor \cite{abstractor}, Transformer and CorelNet\cite{kerg2022neural}. 
We also examine LARS-VSA using binarized attention that uses a regular binarized attention mechanism instead of our binarized HDSymbolicAttention mechanism ( i.e, instead of evaluating $f({h_{O}}_i,{h_{O}}_i\oplus {h_{O}}_j)$ we simply evaluate $f({h_{O}}_i,{h_{O}}_j)$), to assess the effect of vector orthogonality on LARS-VSA performance. This is referred to as the ablation model in this section.

As the number of elements to sort increases, the performance of all models decreases slightly due to the increasing complexity of the abstract rules. However, LARS-VSA achieves better accuracy than other baselines (between $1.66$x and $2.25$x better than Relational-Abstractor for 5 elements sorting and between $1.56$x and $3.33$x for 6 elements sorting) and the ablation model (around $1.1$x for 5 elements sorting and $1.5$x for 6 elements sorting). Relational-Abstractor \cite{abstractor} still outperforms the transformer and CorelNet-Softmax, validating the results obtained in \cite{abstractor}. LARS-VSA demonstrates greater generalizability compared to the ablation model, suggesting that vector orthogonality in high dimensions has a more significant impact as sequence length increases. This indicates that the attention score matrices of LARS-VSA become less informative as the sequence length increases.

\begin{figure}[ht]
    \centering    \includegraphics[width=0.8\textwidth]{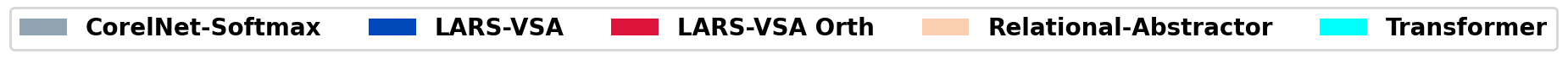}
    \begin{subfigure}[t]{0.35\textwidth}
        \centering\captionsetup{width=.9\linewidth}
        \includegraphics[width=\textwidth]{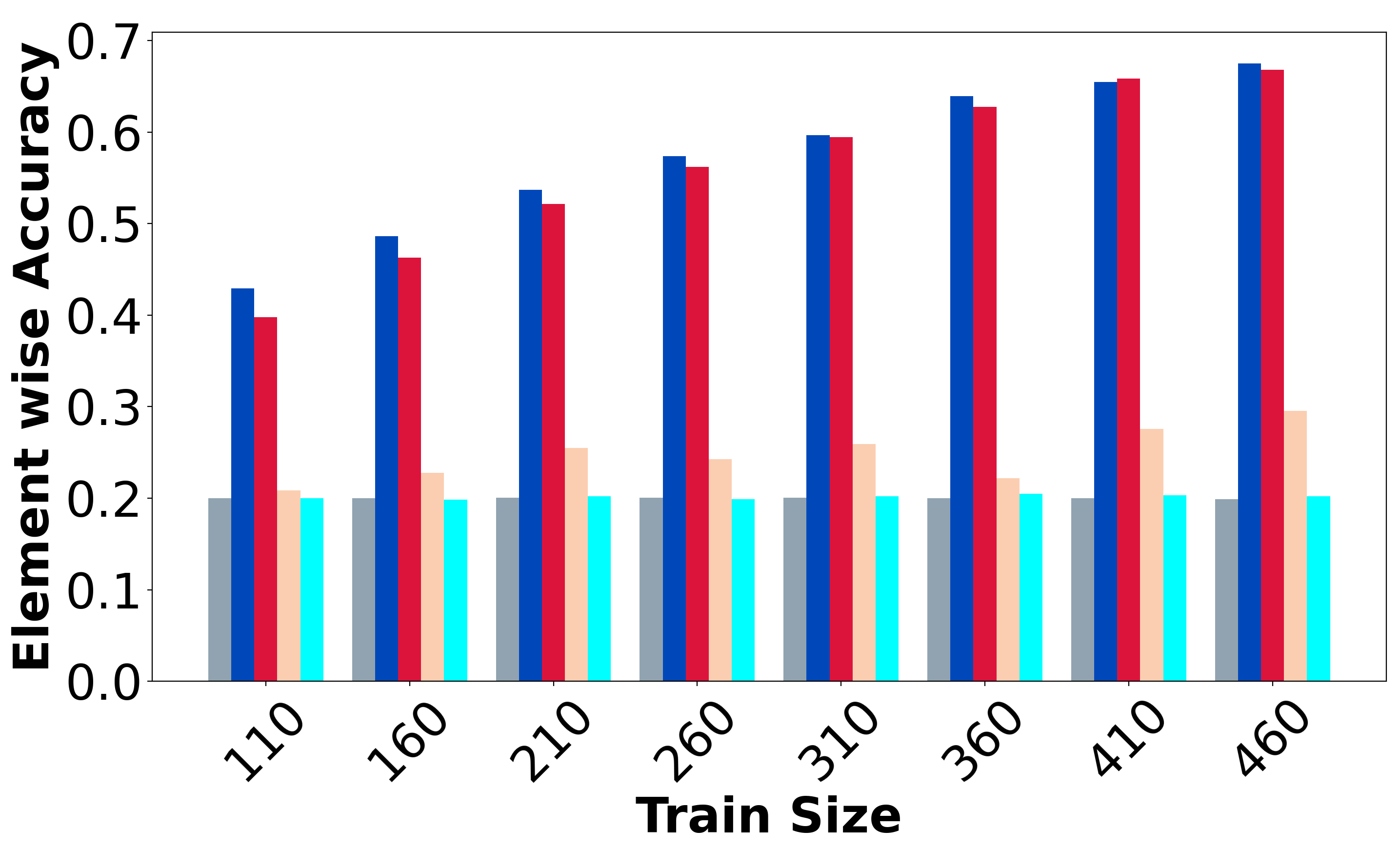}
        \vskip-5pt
        \caption{5 Elements Sequence Sorting}\label{fig:s-5}
    \end{subfigure}
    \begin{subfigure}[t]{0.35\textwidth}
        \centering\captionsetup{width=.9\linewidth}
        \includegraphics[width=\textwidth]{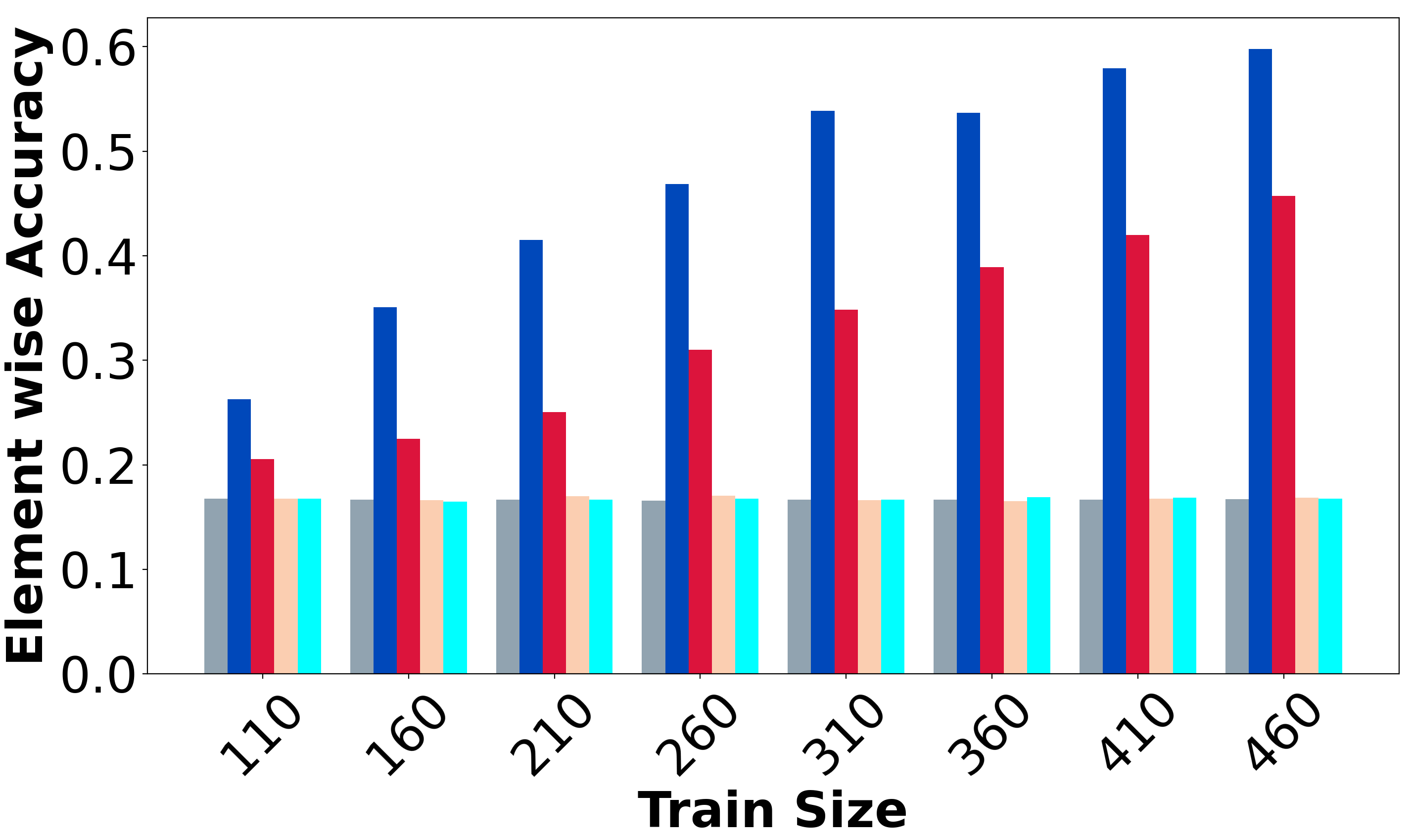}
        \vskip-5pt
        \caption{6 Elements Sequence Sorting}\label{fig:s-6}
    \end{subfigure}
    \caption{Performance of LARS-VSA and LARS-VSA orth compared to baselines. The CorelNet\cite{kerg2022neural} uses \textit{Softmax} activation}
    \vskip-10pt
\end{figure}
\subsection{Math problem-solving: partially-relational sequence-to-sequence tasks}
\begin{figure}[ht]
    \begin{center}
    \begin{small}
    \begin{tabular}{cc}
        \begin{tabular}{l}
        Task: \texttt{numbers\_list\_prime\_factors}\\
        Question: \texttt{What are the prime factors of 121 }\\
        Answer: \texttt{11, 11}
        \end{tabular}
        &
        \begin{tabular}{l}
        Task: \texttt{numbers\_round\_number\_composed}\\
        Question: \texttt{Round 456.789 to 1 decimal place.}\\
        Answer: \texttt{456.8}
        \end{tabular}
    \end{tabular}
    \end{small}
    \end{center}
    \caption{Examples of input/target sequences from the math problem-solving dataset.}\label{fig:math_dataset}
    \vspace{-0.2in}
\end{figure}

The object-sorting experiments in Section \ref{object_sorting} are characterized as "purely relational" because the set of pairwise $\prec$ order relations provides sufficient information to solve the task. For general sequence-to-sequence tasks, there may not always be such a relation. In such cases an architecture with a relational bottleneck may exhibit high abstraction level which might give it an advantage over a purely object level architecture such as a transformer. 
We assessed the LARS-VSA architecture and other baselines on 4 different subset of the mathematical reasoning set of tasks \cite{saxton2019analysing}. Two of them are presented in Figure \ref{fig:math_dataset}. The rest are  described in the appendix.
\begin{figure}[ht!]
    \centering
    \includegraphics[width=0.8\textwidth]{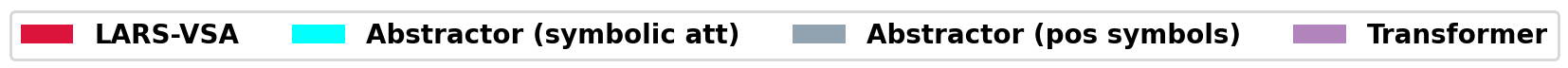}
    \begin{subfigure}[b]{0.42\textwidth}
        \centering
        \includegraphics[width=\textwidth]{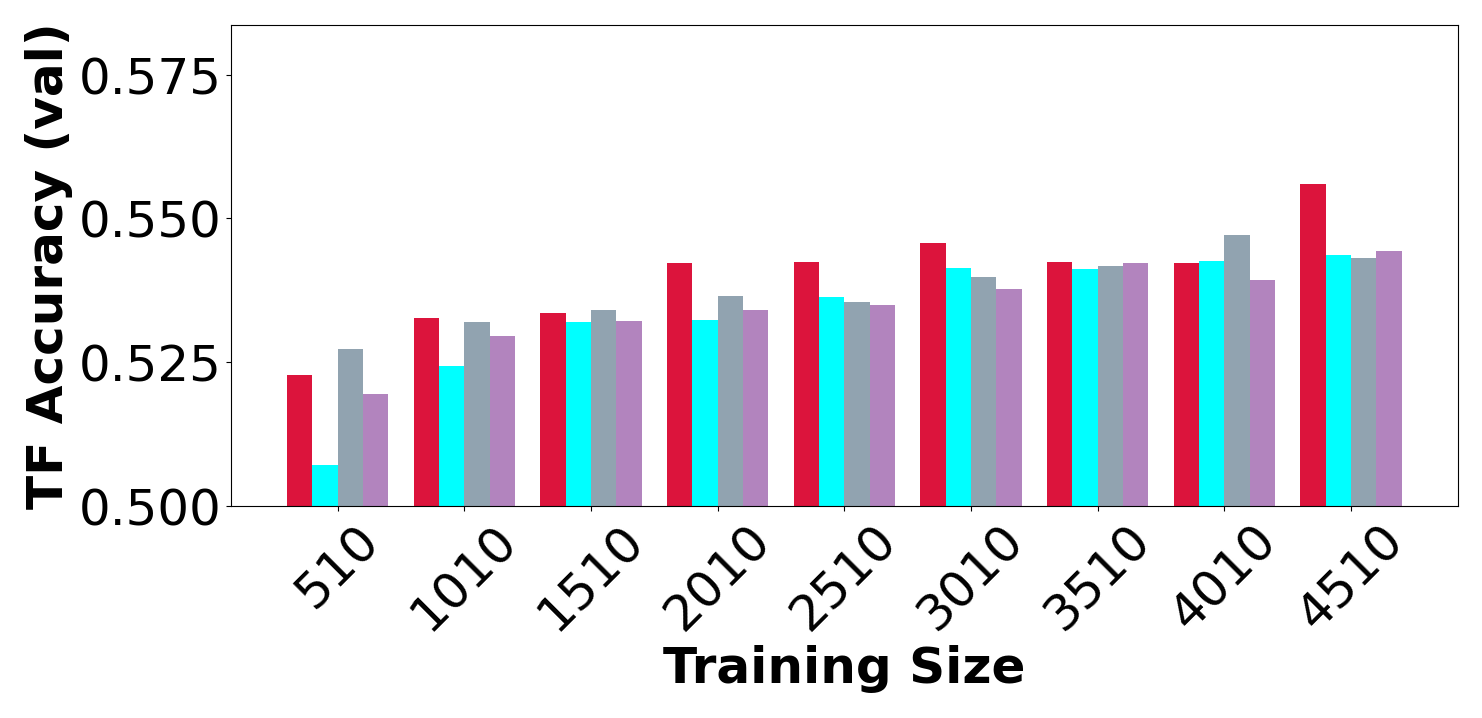}
        \caption{Numbers list prime factors task}
        \label{fig:task1}
    \end{subfigure}
    \hfill
    \begin{subfigure}[b]{0.42\textwidth}
        \centering
        \includegraphics[width=\textwidth]{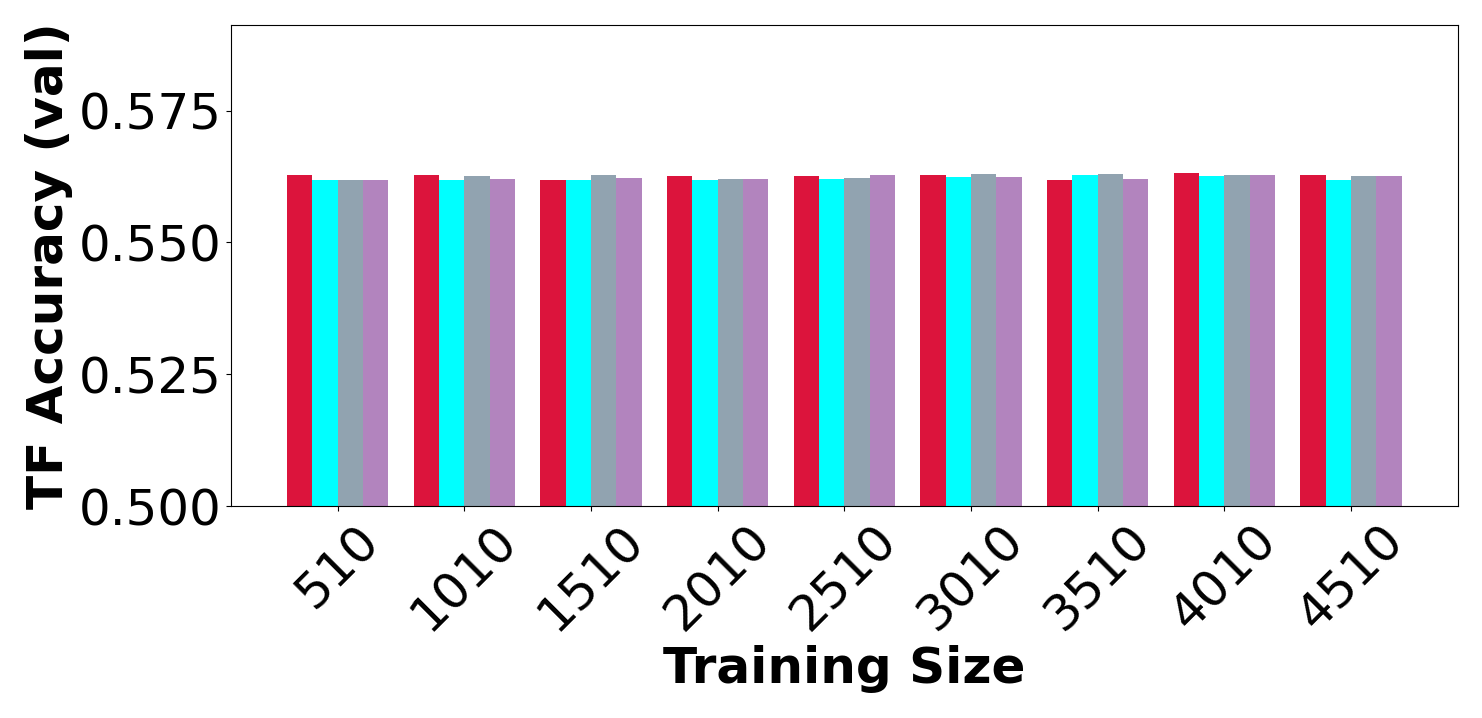}
        \caption{Numbers place value composed task}
        \label{fig:task2}
    \end{subfigure}
    \label{fig:total_math}
    \centering
    \caption{Accuracy of LARS-VSA compared to SOTA for two mathematical reasoning tasks}
\end{figure}
\begin{figure}[ht!]
    \centering
    \begin{subfigure}[b]{0.3\textwidth}
        \centering
        \includegraphics[width=\textwidth]{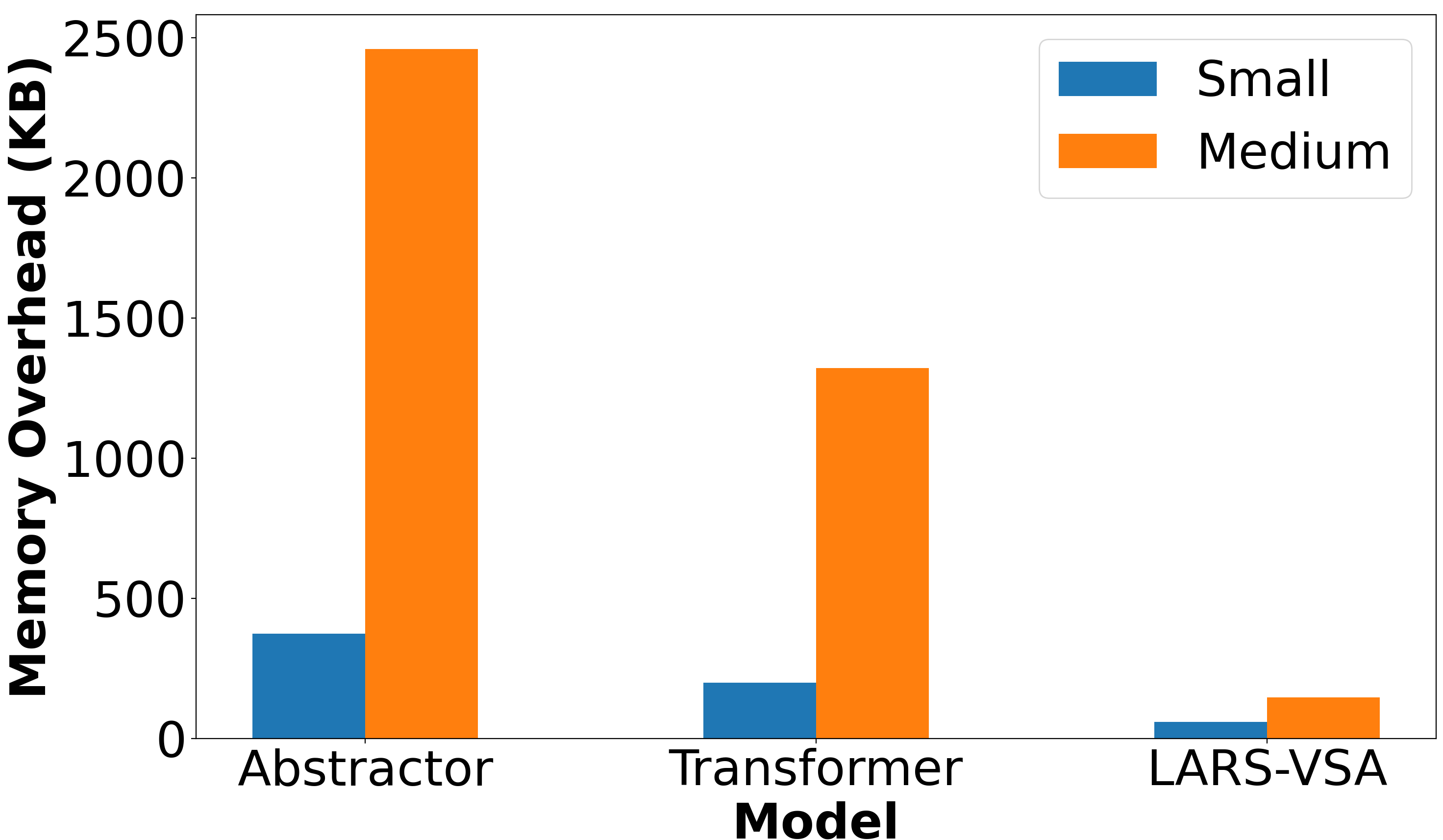}
        \caption{\centering Memory size of different models}
        \label{fig:mem}
    \end{subfigure}
    \hfill
    \begin{subfigure}[b]{0.4\textwidth}
        \centering
        \includegraphics[width=\textwidth]{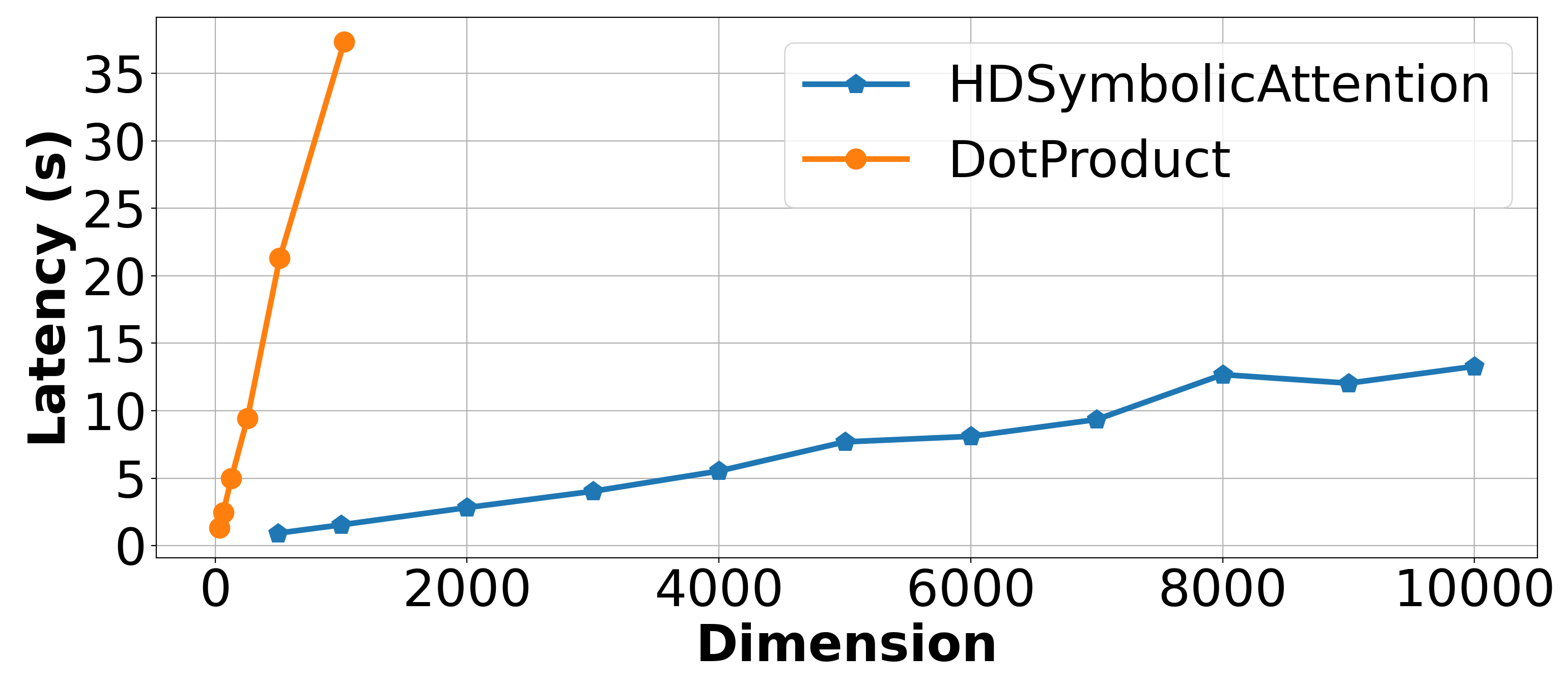}
        \caption{\centering Latency of dot-product Vs binarized cosine similarity for $10^{7}$ iterations}
        \label{fig:lat}
    \end{subfigure}
    \label{fig:namename}
    \centering
    \caption{Memory overhead of LARS-VSA, Abstractor and Transformer Figure (a) and Latency of dot-product compared to binarized cosine similarity Figure (b)}
    \vspace{-0.2in}
\end{figure}
The baselines used to assess the LARS-VSA performance are: two version of Abstractor with symbolic relational attention\cite{abstractor} and positional symbols\cite{abstractor}, and a Transformer architecture. All architectures use a "small" Encoder-Decoder structure except LARS-VSA, which has a lightweight encoder. For the first task (Figure \ref{fig:task1}), LARS-VSA outperforms other models by up to $4\%$ in accuracy. For the second task (Figure \ref{fig:task2}), all models have comparable accuracies. LARS-VSA excels in handling complex relational tasks with minimal overhead.
We compared LARS-VSA's speedup and memory savings to a regular transformer and Abstractor for the mathematical reasoning task. Figure \ref{fig:mem} shows the memory overhead in bits for Abstractor (symbolic attention), Transformer, and LARS-VSA using small and medium Encoder-Decoder structures. Figure \ref{fig:lat} illustrates the latency of one elementary operation for self-attention (dot product) and HDSymbolicAttention (binarized attention score). LARS-VSA is up to 17x and 9x more memory efficient than Abstractor and Transformer with medium Encoder-Decoder structures, and up to 6x and 3x more efficient with small Encoder-Decoder architectures.
From Figure \ref{fig:lat}, HDSymbolicAttention and Dot product grow linearly with rates of $0.14 \times 10^{-2}$ and $3.6 \times 10^{-2}$, respectively, indicating HDSymbolicAttention is about 25x faster across all latency-to-vector dimension ratios.

\newpage
\section{Conclusion, Limitation and Future Directions}
In this paper, we introduce a novel \textit{relational bottleneck} for vector symbolic architectures, utilizing a hyperdimensional computing-specific attention mechanism called \textit{HDSymbolicAttention}. This mechanism binds correlated object feature high-dimensional vectors with symbolic abstract high-dimensional vectors, reducing interference between object-level features and abstract rules, effectively addressing the \textit{curse of compositionality} problem. However, interference is not only spatial but also temporal, as relational representations can interfere with symbolic rules over time, potentially leading to \textit{catastrophic forgetting}, a common issue in online-trained neural networks. Future work will explore the impact of this structure on online abstract reasoning abilities and develop a decoder (i.e, see Figure\ref{fig:architecture}) that relies solely on hyperdimensional computing capabilities, moving away from the current cost-inefficient self-attention and cross-attention mechanisms.
\bibliography{ref}
\bibliographystyle{plainnat}


\appendix

\section{Appendix / supplemental material}

\subsection*{Code and Reproducibility}
Code, detailed experimental logs, and instructions for reproducing our experimental results are available at: \url{https://github.com/mmejri3/LARS-VSA}.
\subsection*{Discriminative Tasks}
In this section, we provide comprehensive information on the architectures, hyperparameters, and implementation specifics of our experiments. All models and experiments were developed using TensorFlow. The code, along with detailed experimental logs and reproduction instructions, is available in the project's public repository.
\subsection{Computational Resources}
For training the LARS-VSA and SOTA on the Discriminative relational tasks we used a CPU (11th Gen Intel® Core™ i7) For training LARS-VSA and SOTA on the purely and partially sequence to sequence abstract reasoning we used a cluster of 4 GPUs RTX4090 with 24GB of RAM each.    
\subsection{\textit{Discriminative Tasks}}

\subsubsection{Pairwise Order}
Each model in this experiment follows the structure: \texttt{input} $\to$ \{\texttt{module}\} $\to$ \texttt{flatten} $\to$ \texttt{MLP}, where \{\texttt{module}\} represents one of the described modules and \texttt{MLP} is a multilayer perceptron with one hidden layer of 32 neurons activated by ReLU.

\paragraph{LARS-VSA Architecture}
Each model is composed of a single head $H=1$ and a hypervector of dimensionality equal to $D=1000$. We use a dropout of $0.1$ to avoid overfitting. The two hypervectors are flattened and passed to the hidden layers with 32 neurons with a ReLU activation. It is forwarded to a 1 neuron final layer with sigmoid activation.

\paragraph{Abstractor Architecture}
The Abstractor module utilizes the following hyperparameters: number of layers $L = 1$, relation dimension $d_r = 4$, symbol dimension $d_s = 64$, projection (key) dimension $d_k = 16$, feedforward hidden dimension $d_{\mathrm{ff}} = 64$, relation activation function $\sigma_{\mathrm{rel}} = \mathrm{softmax}$. No layer normalization or residual connection is applied. Positional symbols, which are learned parameters of the model, are used as the symbol assignment mechanism. The output of the Abstractor module is flattened and passed to the \texttt{MLP}.

\paragraph{CoRelNet Architecture}
CoRelNet has no hyperparameters. Given a sequence of objects, $X = (x_1, \ldots, x_m)$, standard CoRelNet~\citep{kerg2022neural} computes the inner product and applies the Softmax function. We also add a learnable linear map, $W \in \mathbb{R}^{d \times d}$. Hence, $\bar{R} = \text{Softmax}(R), R = {\left[\langle W x_i, W x_j\rangle\right]}_{ij}$. The CoRelNet architecture flattens $\bar{R}$ and passes it to an \texttt{MLP} to produce the output. The asymmetric variant of CoRelNet is given by $\bar{R} = \text{Softmax}(R), R = {\left[\langle W_1 x_i, W_2 x_j\rangle\right]}_{ij}$, where $W_1, W_2 \in \mathbb{R}^{d \times d}$ are learnable matrices.

\paragraph{PrediNet Architecture}
Our implementation of PrediNet~\citep{shanahan2020explicitly} is based on the authors' publicly available code. We used the following hyperparameters: 4 heads, 16 relations, and a key dimension of 4 (see the original paper for the meaning of these hyperparameters). The output of the PrediNet module is flattened and passed to the \texttt{MLP}.

\paragraph{MLP}
The embeddings of the objects are concatenated and passed directly to an \texttt{MLP}. The \texttt{MLP} has two hidden layers, each with 32 neurons and a ReLU activation.

\paragraph{Training/Evaluation}
We use the cross-entropy loss and the AdamW optimizer with a learning rate of $10^{-4}$, We use a batch size of 64. Training is conducted for 50 epochs. The evaluation is performed on the test set. The experiments are repeated 10 times and we take the mean accuracy and will report the standard deviation later.  

\subsubsection{\textit{SET}}

The card images are RGB images with dimensions of $70 \times 50 \times 3$. A CNN embedder processes these images individually, producing embeddings of dimension $d=64$ for each card. The CNN is trained to predict four attributes of each card, after which embeddings are extracted from an intermediate layer and the CNN parameters are frozen. The common architecture follows the structure: \texttt{CNN Embedder} $\to$ \{\texttt{Abstractor, CoRelNet, PrediNet, MLP}\} $\to$ \texttt{Flatten} $\to$ \texttt{Dense(2)}. Initial tests with the standard CoRelNet showed no learning. By adjusting hyperparameters and architecture, we found that removing the softmax activation in CoRelNet improved its performance slightly. Details of hyperparameters are provided below.

\textbf{Common Embedder Architecture:} 
The architecture is structured as: \texttt{Conv2D} $\to$ \texttt{MaxPool2D} $\to$ \texttt{Conv2D} $\to$ \texttt{MaxPool2D} $\to$ \texttt{Flatten} $\to$ \texttt{Dense(64, relu)} $\to$ \texttt{Dense(64, relu)} $\to$ \texttt{Dense(2)}. The embedding is taken from the penultimate layer. The CNN is trained to perfectly predict the four attributes of each card, achieving near-zero loss.

\textbf{LARS-VSA Architecture:}
The LARS-VSA module has the following hyperparameters: number of heads $H = 1$, hypervector dimension $D = 1000$. The output are flattened and then feedforward hidden dimension $d_{\mathrm{ff}} = 64$ and then tp a final layer with a single neuron with $sigmoid$ activation. We used a dropout of $0.4$ to avoid overfitting.

\textbf{Abstractor Architecture:}
The Abstractor module has the following hyperparameters: number of layers $L = 1$, relation dimension $d_r = 4$, symmetric relations ($W_q^{i} = W_k^{i}$ for $i \in [d_r]$), a relation activation ReLU, symbol dimension $d_s = 64$, projection (key) dimension $d_k = 16$, feedforward hidden dimension $d_{\mathrm{ff}} = 64$, and no layer normalization or residual connection. Positional symbols, which are learned parameters, are used as the symbol assignment mechanism.

\textbf{CoRelNet Architecture:}
The standard CoRelNet, described above, computes $R = \text{Softmax}(A)$, where $A = {\left[\langle W x_i, W x_j\rangle\right]}_{ij}$. This variant was stuck at 50\% accuracy regardless of the training set size. Removing the softmax improved performance. Figure \ref{fig:exp_set_classification} compares both variants of CoRelNet.

This finding indicates that making $\sigma_{\mathrm{rel}}$ a configurable hyperparameter is beneficial. Softmax normalizes relations contextually, which might be useful at times but can hinder relational models when absolute relations between object pairs are needed, independent of other relations.

\textbf{PrediNet Architecture:}
We used the following hyperparameters: 4 heads, 16 relations, and a key dimension of 4 (as per the original paper). The output of the PrediNet module is flattened and passed to the MLP.

\textbf{MLP:}
The embeddings of the objects are concatenated and passed directly to an MLP. The MLP has two hidden layers, each with 32 neurons and ReLU activation.

\textbf{Data Generation:}
Data is generated by randomly sampling a ``set'' with probability 1/2 and a non-``set'' with probability 1/2. The triplet of cards is then randomly shuffled.

\textbf{Training/Evaluation:}
We use cross-entropy loss and the AdamW optimizer with a learning rate of $10^{-4}$, The batch size is 512. Training is conducted for 200 epochs. Evaluation is performed on the test set. We train our model on a set of randomly N number of samples $N \in \{ 1000,2000,3000,4000,5000,6000\}$

\subsubsection{\textit{ABA}}
\begin{figure}[ht]
    \centering
    \includegraphics[width=0.9\textwidth]{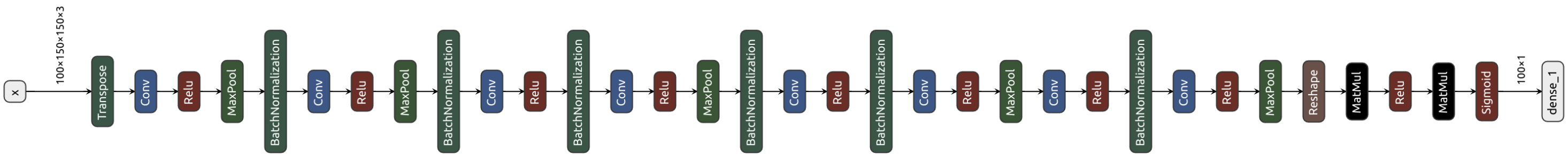}
    \caption{CNN Feature Extractor}
    \label{fig:cnn_extractor}
\end{figure}
This task consists of an Identity rule ABA, so the input is a sequence of 3 vectors of shape 512. The class of each vector is either A or B. If the sequence is ordered in an ABA order the relational model should label it as 1 otherwise as 0. The dataset used is CatVsDog with an 20000 images in total. The CNN used as a Feature extractor is shown in Figure\ref{fig:cnn_extractor}. It is trained over 100 epochs with an Adam optimized with $lr=10^{-3}$. The last layer of the CNN is taken as a feature map of shape 512. We trained the relational models on N samples with $N \in \{ 500, 750, 1000, 2000, 3000, 5000 \}$
We used the same baseline settings as \textit{SET} experiment. 

\subsection{Relational sequence-to-sequence tasks}
\subsubsection{\textit{Object-sorting task}}

\paragraph{LARS-VSA Architecture}
We used the architecture (B) of Figure \ref{fig:architecture}. The encoder used in a BatchNormalization layer. The LARS-VSA model includes 2 heads $H=2$ with a hyperdimensional dimension of $D=1000$. The decoder used 4 layers, 2 attention heads, a feedforward network with 64 hidden units, and a model dimension of 64.

\paragraph{Abstractor Architecture}
Each of the Encoder, Abstractor, and Decoder modules consists of $L = 2$ layers, with 2 attention heads/relation dimensions, a feedforward network with $d_{\mathrm{ff}} = 64$ hidden units, and a model/symbol dimension of $d_{\mathrm{model}} = 64$. The relation activation function is $\sigma_{\mathrm{rel}} = \mathrm{Softmax}$. Positional symbols are utilized as the symbol assignment mechanism, which are learned parameters of the model. 

\paragraph{Transformer Architecture}
We implement the standard Transformer architecture as described by \citep{vaswani2017attention}. Both the Encoder and Decoder modules use matching hyperparameters per layer, with an increased number of layers. Specifically, we use 4 layers, 2 attention heads, a feedforward network with 64 hidden units, and a model dimension of 64.

\paragraph{Training and Evaluation}
The models are trained using cross-entropy loss and the Adam optimizer with a learning rate of $5\cdot{10^{-4}}$. We use a batch size of 64 and train for 150 epochs. To evaluate learning curves, we vary the training set size, sampling random subsets ranging from 110 to 460 samples in increments of 50. Each sample consists of an input-output sequence pair. For each model and training set size, we perform 10 runs with different random seeds, reporting the mean of accuracy.

\subsubsection{\textit{Math Problem-Solving}}

The dataset comprises various math problem-solving tasks, each featuring a collection of question-answer pairs. These tasks cover areas such as solving equations, expanding polynomial products, differentiating functions, predicting sequence terms, and more. Figure \ref{fig:total_math_2} provides an example of these question-answer pairs. The dataset includes 2 million training examples and 10,000 validation examples per task. Questions are limited to a maximum length of 160 characters, while answers are restricted to 30 characters. Character-level encoding is utilized, employing a shared alphabet of 95 characters, which includes upper and lower case letters, digits, punctuation, and special tokens for start, end, and padding.

\paragraph{Abstractor Architectures.} The Encoder, Abstractor, and Decoder modules share identical hyperparameters: number of layers $L = 1$, relation dimension/number of heads $d_r = n_h = 2$, symbol dimension/model dimension $d_s = d_{\mathrm{model}} = 64$, projection (key) dimension $d_k = 32$, feedforward hidden dimension $d_{\mathrm{ff}} = 128$. The relation activation function in the Abstractor is $\sigma_{\mathrm{rel}} = \mathrm{softmax}$. One model employs positional symbols with sinusoidal embeddings, while the other utilizes symbolic attention with a symbol library of $n_s = 128$ learned symbols and $2$-head symbolic attention.

\paragraph{Transformer Architecture.} The Transformer Encoder and Decoder possess hyperparameters identical to those of the Encoder and Decoder in the Abstractor architecture.

\paragraph{LARS-VSA Architectures.}
The LARS-VSA uses the architecture (C) of Figure\ref{fig:architecture}. We used the same Decoder as the Abstractor architecture. However, the encoder structure is limited to a BatchNormalization layer. The LARS-VSA has 2 heads $H=2$ and a hyperdimensional  dimension of $D = 1000$.

\paragraph{Training/Evaluation.} Each model is trained for 150 epochs using categorical cross-entropy loss and the Adam optimizer with a learning rate of $6 \times 10^{-4}$, $\beta_1 = 0.9$, $\beta_2 = 0.995$, and $\varepsilon = 10^{-9}$. The batch size used is 16. The training set is composed of N samples $N \in \{ 510, 1010, 1510, 2010, 2510, 3010, 3510, 4010, 4510 \}$. 
\subsection{Additional Results}
\paragraph{Standard deviation discriminative relational tasks}
for the Descriminative relational tasks (pairwise ordering, SET, ABA) we repeated the experiment 10 times. In the previous section we presented the mean value, In the Figure \ref{fig:std} we presented the standard deviation of the mean value. 

\begin{figure}[ht]
    \centering
    \includegraphics[width=0.8\textwidth]{figures/experiments/legend_disc.png}
    \begin{subfigure}[t]{0.3\textwidth}
        \centering\captionsetup{width=.9\linewidth}
        \includegraphics[width=\textwidth]{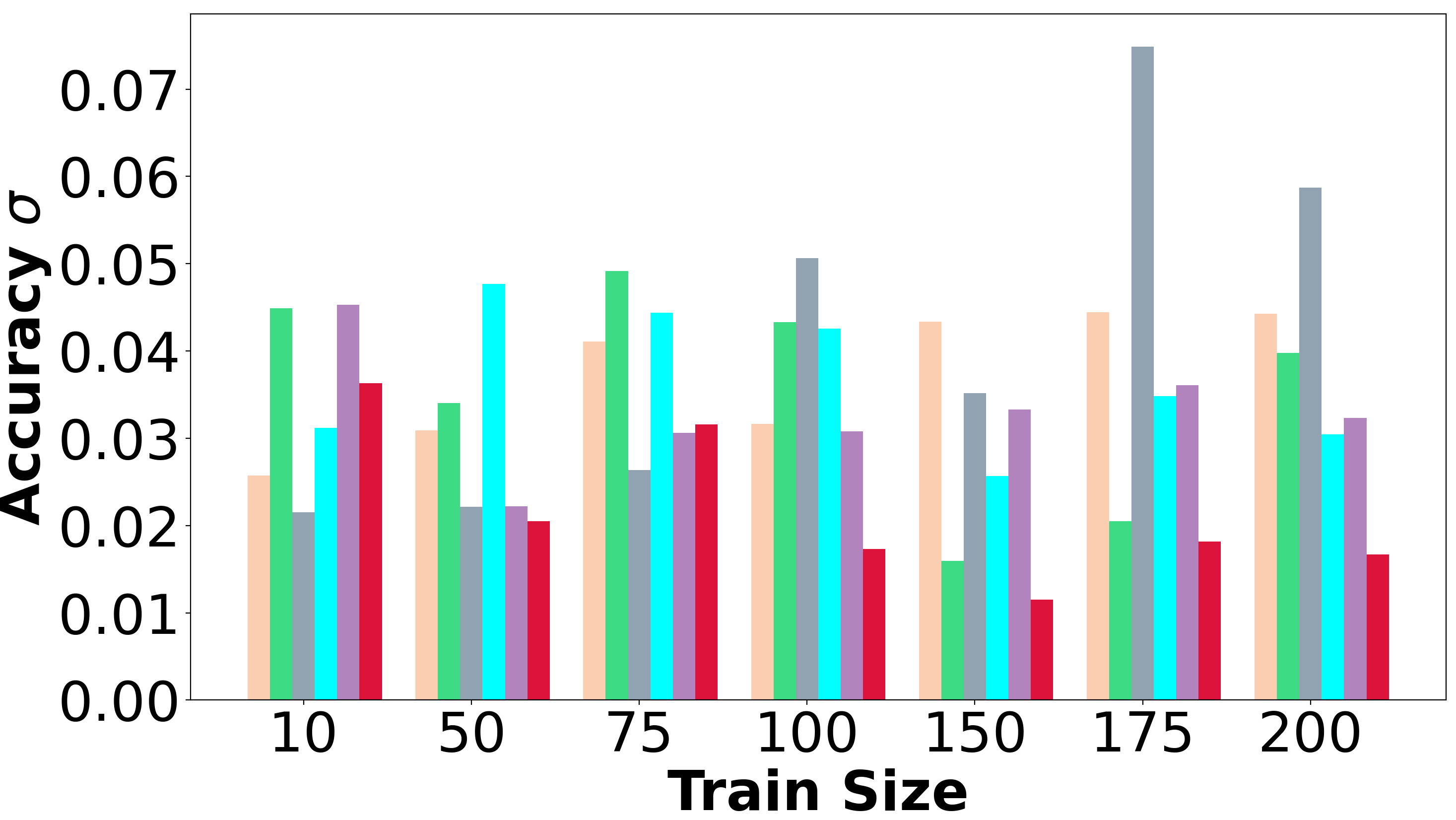}
        \vskip-5pt
        \caption{Learning $\prec$ relation  standard deviation}\label{fig:exp_order_relation_std}
    \end{subfigure}
    \begin{subfigure}[t]{0.32\textwidth}
        \centering\captionsetup{width=.9\linewidth}
        \includegraphics[width=\textwidth]{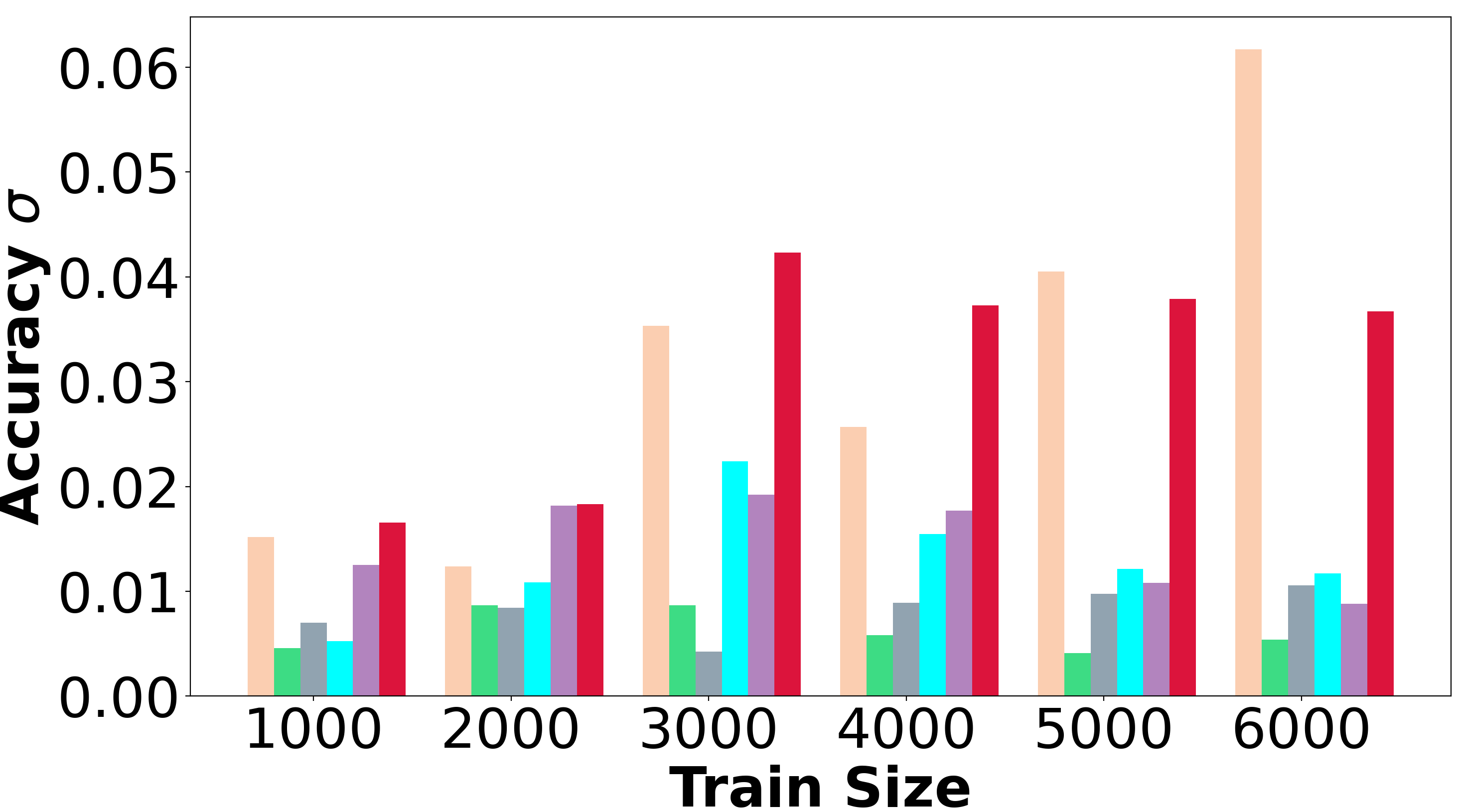}
        \vskip-5pt
        \caption{SET Classification standard deviation}\label{fig:exp_set_classification_std}
    \end{subfigure}
    \begin{subfigure}[t]{0.32\textwidth}
        \centering\captionsetup{width=.9\linewidth}
        \includegraphics[width=\textwidth]{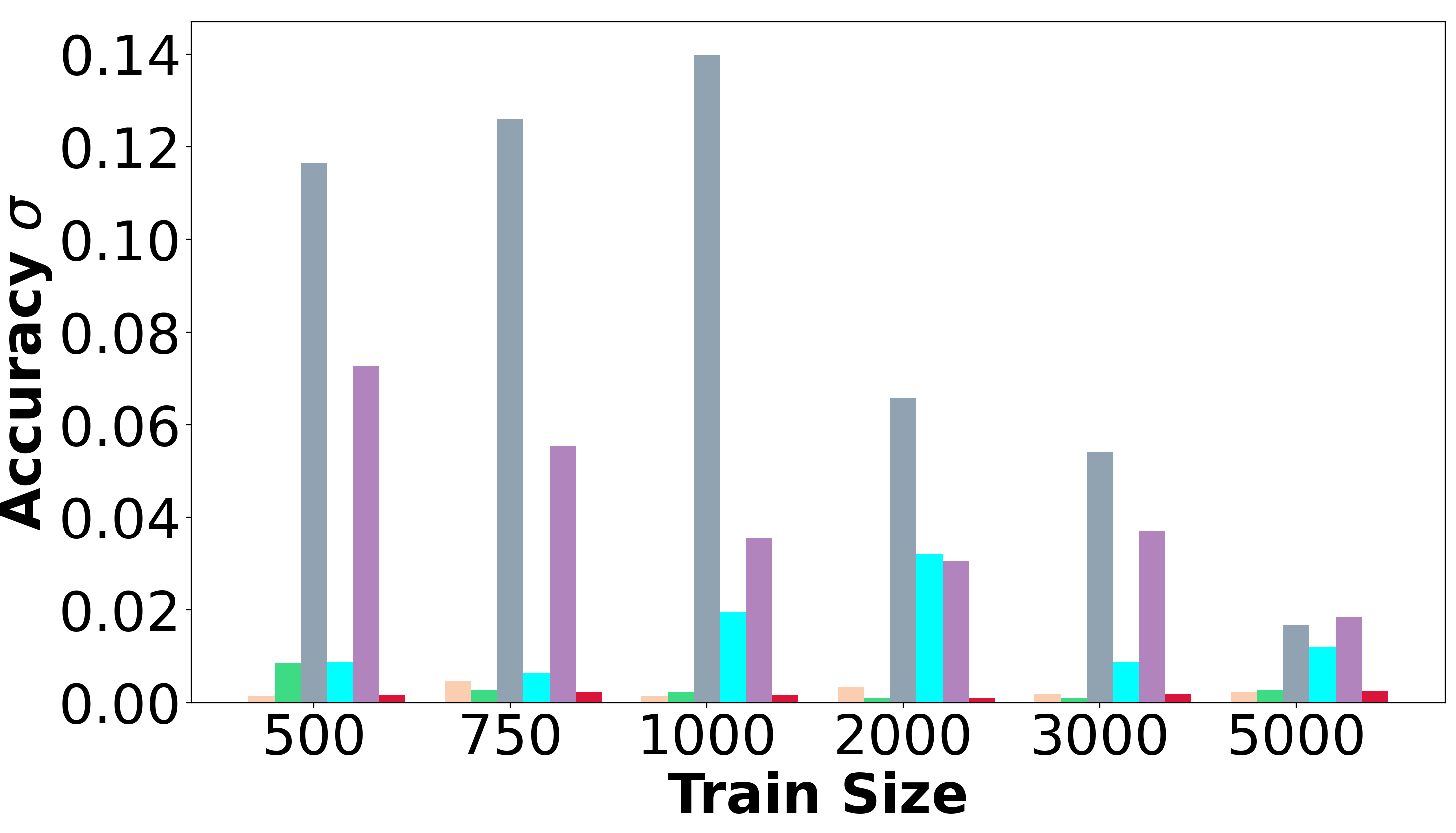}
        \vskip-5pt
        \caption{Learning Identiy Rule  standard deviation}\label{fig:exp_ABA_std}
    \end{subfigure}
    \caption{Standard Deviation of experiments on discriminative relational tasks and comparison to SOTA.}
    \label{fig:std}
\end{figure}
It turns out from Figures \ref{fig:exp_ABA_std} and \ref{fig:exp_order_relation_std} that LARS-VSA showed the highest robustness to random seed proving its stability to randomness. For Figure \ref{fig:exp_set_classification_std}, LARS-VSA showed higher standard deviation compared to SOTA but it's still low ($\sigma \leq 0.06$)  
\paragraph{Mathematical reasoning tasks}
\begin{figure}[ht]
    \begin{center}
    \begin{small}
    \begin{tabular}{cc}
        \begin{tabular}{l}
        Task: \texttt{algebra\_\_linear\_2d}\\
        Question: \texttt{Solve for x and y: 3x + 2y = 6 and x - y = 2}\\
        Answer: \texttt{x = 2, y = 0}
        \end{tabular}
        &
        \begin{tabular}{l}
        Task: \texttt{numbers\_\_is\_prime}\\
        Question: \texttt{Is 97 a prime number?}\\
        Answer: \texttt{Yes}
        \end{tabular}
    \end{tabular}
    \end{small}
    \end{center}
    \caption{Examples of input/target sequences from the math problem-solving dataset.}\label{fig:math_dataset2}
    \vspace{-0.2in}
\end{figure}

\begin{figure}[ht]
    \centering
    \includegraphics[width=0.8\textwidth]{figures/experiments/legend_math.png}
    \begin{subfigure}[b]{0.42\textwidth}
        \centering
        \includegraphics[width=\textwidth]{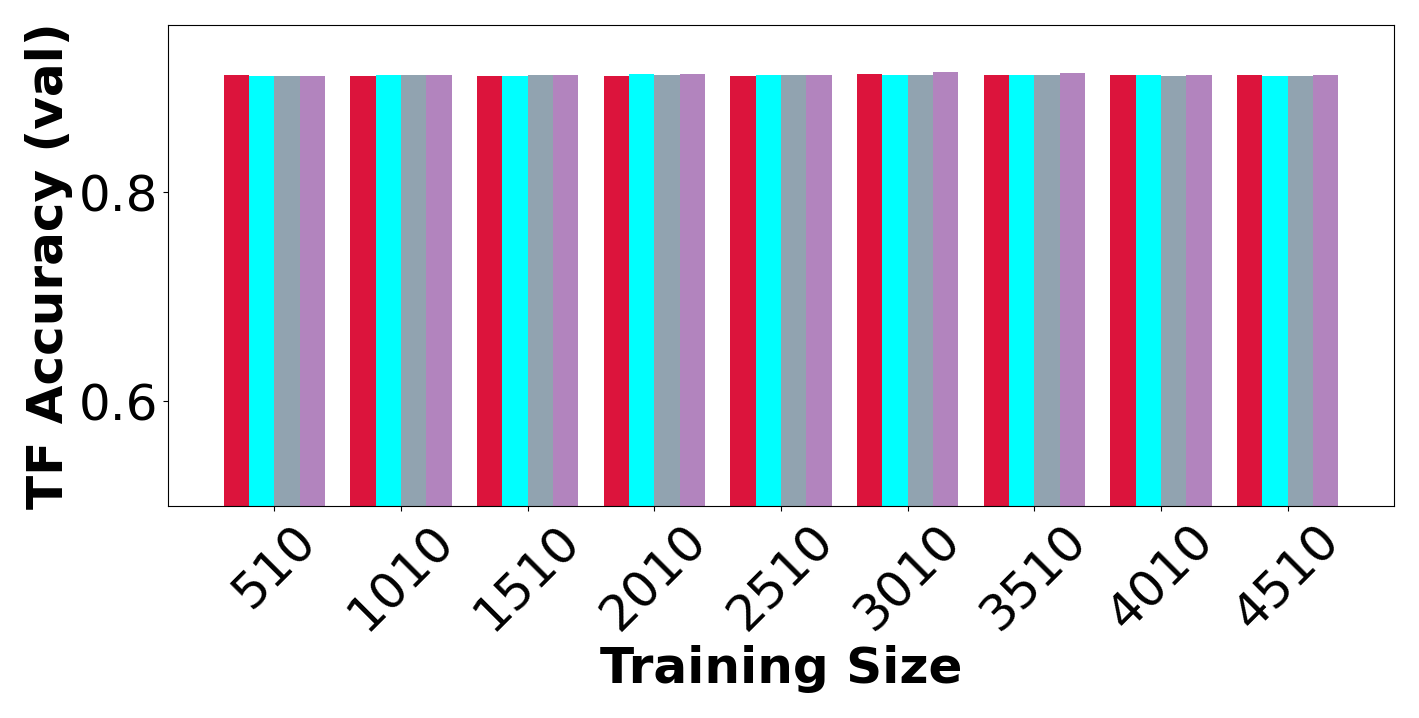}
        \caption{Numbers is prime task}
        \label{fig:task1_m}
    \end{subfigure}
    \hfill
    \begin{subfigure}[b]{0.42\textwidth}
        \centering
        \includegraphics[width=\textwidth]{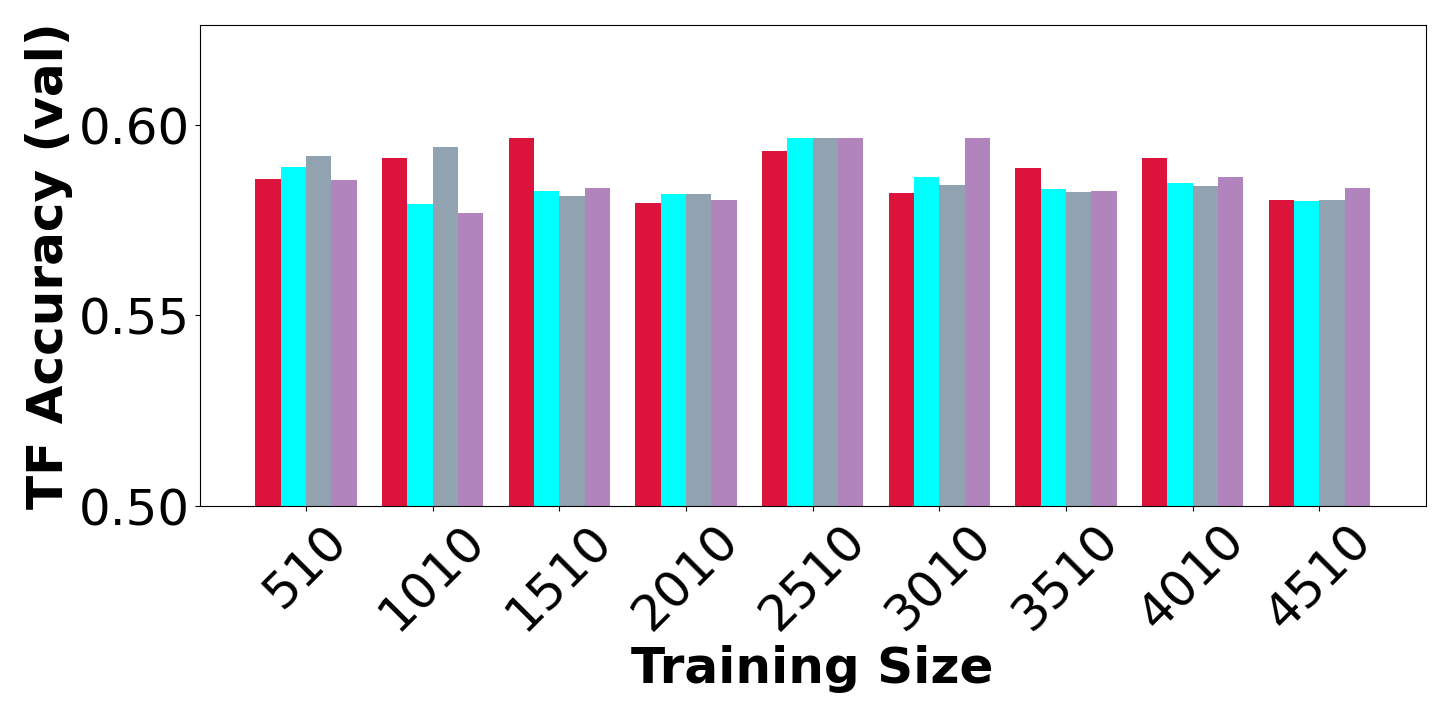}
        \caption{2D linear algebra task}
        \label{fig:task2_m}
    \end{subfigure}
    \centering
    \caption{Accuracy of LARS-VSA compared to SOTA for two mathematical reasoning tasks}
    \label{fig:total_math_2}
\end{figure}

Figures \ref{fig:task1_m} and \ref{fig:task2_m} show that LARS-VSA have better or comparable results compared to Abstractor and Transformer. 

\section{Multi-Attention Decoder}
This Decoder is used for purely and partially sequence to sequence abstract reasoning tasks. It is derived from \cite{abstractor}. 


\end{document}